\newcommand{\mycolor}{RoyalBlue}
\newcommand{\mycitecolor}{Red}
\crefname{equation}{}{}
\newtheorem{theorem}{Theorem}[section]
\newtheorem{proposition}[theorem]{Proposition}
\theoremstyle{remark}
\theoremstyle{definition}
\newcommand{\mb}{\mathbf}
\renewcommand{\mathbf}{\boldsymbol}
\newcommand{\mc}{\mathcal}
\newcommand{\bb}{\mathbb}
\newcommand{\innerprod}[2]{\left\langle #1,  #2 \right\rangle}
\newcommand{\codeurl}{\url{https://github.com/sdbuch/refine}}
\title{Resource-Efficient Invariant Networks: Exponential Gains by Unrolled Optimization}
\author{%
  Sam Buchanan\thanks{Corresponding author:
    \texttt{s.buchanan@columbia.edu}}\,\,\,\thanks{Department of Electrical Engineering, Columbia
  University}\,\,\,\thanks{Data Science Institute, Columbia University} \qquad %
  Jingkai Yan\footnotemark[2]\,\,\,\footnotemark[3] \qquad %
  Ellie Haber\thanks{NYU Tandon School of Engineering} \qquad %
  John Wright\footnotemark[2]\,\,\,\footnotemark[3]\,\,\,\thanks{Department of Applied Physics and Applied Mathematics, Columbia University}
}
\begin{document}

\maketitle

\begin{abstract}
  Achieving invariance to nuisance transformations
is a fundamental challenge in the construction of robust and reliable vision systems.
Existing approaches to invariance scale exponentially with the dimension of the
family of transformations, making them unable to cope with natural variabilities in visual data
such as changes in pose and perspective.
We identify a common limitation of these approaches---they rely on sampling to traverse the
high-dimensional space of transformations---and propose a new computational primitive for building
invariant networks based instead on optimization, which in many scenarios provides a provably more
efficient method for high-dimensional exploration than sampling.
We provide empirical and theoretical corroboration of the efficiency gains and soundness of our
proposed method, and demonstrate its utility in constructing an efficient invariant network for a
simple hierarchical object detection task when combined with unrolled optimization.  Code for our
networks and experiments is available at \codeurl.

\end{abstract}

\section{Introduction}\label{sec:intro}

\begin{figure*}[!htb]
  \centering
  \includegraphics[width=\linewidth]{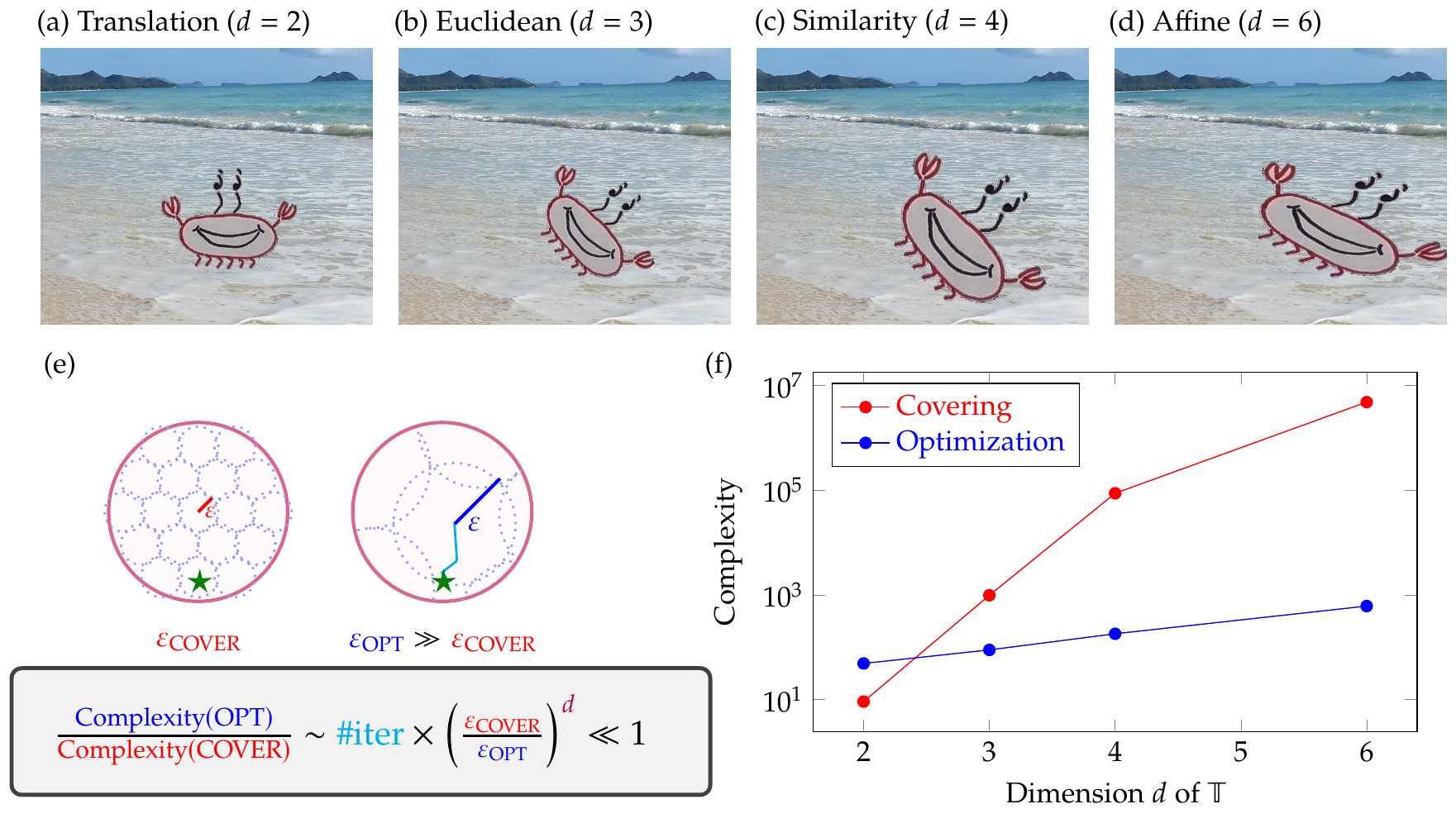}
  \caption{Comparing the complexity of covering-based and optimization-based methods for invariant
    recognition of a template embedded in visual clutter.  \textbf{(a-d)}: We consider four
    different classes of deformations that generate the observation of the template, ranging
    across shifts, rotations, scale, and skew. The dimension $d$ of the family of transformations
    increases from left to right. \textbf{(e)}: A geometric illustration of the covering and
    optimization approaches to global invariance: in certifying that a query (labeled with a star)
    is a transformed instance of the template (at the base point of the solid red/blue lines),
    optimization can be vastly more efficient than covering, because it effectively covers the
    space at the \textit{scale of the basin of attraction} of the optimization problem, which is
    always larger than the template's associated $\veps_{\mathrm{COVER}}$. %
    \textbf{(f)}: Plotting the average number of convolution-like operations necessary to reach a
    zero-normalized cross-correlation (ZNCC) of 0.9 between the template and a
    randomly-transformed query across the different deformation classes. Optimization leads to an
    efficiency gain of several orders of magnitude as the dimensionality of the family of
    transformations grows. Precise experimental details are recorded in
  \Cref{sec:extra_experiment_details}.}
\label{fig:centerpiece}
\end{figure*}

In computing with any kind of realistic visual data, one must contend with a dizzying array of
complex variabilities: statistical variations due to appearance and shape, geometric variations
due to pose and perspective, photometric variations due to illumination and cast shadows, and
more. Practical systems cope with these variations by a data-driven approach, with deep  neural network architectures
trained on massive datasets. This approach is especially successful at coping with variations in texture
and appearance. For invariance to geometric transformations of the input
(e.g., translations, rotations, and scaling, as in \Cref{fig:centerpiece}(a-d)), the predominant
approach in practice is also data-driven: %
the `standard pipeline' is to deploy an architecture that
is structurally invariant to translations (say, by virtue of convolution and pooling), and improve
its stability with respect to other types of transformations by data augmentation. Data
augmentation generates large numbers of synthetic training samples by applying various
transformations to the available training data, and demonstrably contributes to the performance of
state-of-the-art systems \cite{Chen2020-bl,Cubuk2018-yu,Hendrycks2019-ae}. However, it runs into a
basic resource efficiency barrier associated with the dimensionality of the set of nuisances: {\em
learning over a $d$-dimensional group of transformations requires both data and architectural
resources that are {\em exponential} in $d$}
\cite{Chen2019-eq,Basri2017-vn,Nakada2019-jr,Schmidt-Hieber2019-jt,Cloninger2020-ov}.  This is a
major obstacle to achieving invariance to large, structured deformations such as 3D rigid body
motion ($d = 6$), homography ($d = 8$), and linked rigid body motion \cite{Kanazawa_2019_CVPR} and
even nonrigid deformations \cite{Zaitsev2015-ot} ($d \gg 8$). It is no surprise,
then, that systems trained in this fashion remain %
vulnerable to adversarial
transformations of domain \cite{Azulay2018-nm, Alcorn2018-qg, Alaifari2018-ef, Fawzi:210209,
Engstrom2017-ps, Kanbak2018-lw, Xiao2018-eq}---it simply is not possible to generate enough
artificial training data to learn transformation manifolds of even moderate dimension.  Moreover,
this approach is fundamentally wasteful: learning nuisances known to be present in the input data
wastes architectural capacity that would be better spent coping with statistical variability in
the input, or learning to perform complex tasks.

These limitations of the standard pipeline are well-established, and they have inspired a range of
alternative \textit{architectural} approaches to achieving invariance, where each layer of the
network incorporates computational operations that reflect the variabilities present in the data.
Nevertheless, as we will survey in detail in \Cref{sec:relwork}, all known approaches are subject
to some form of exponential complexity barrier: the computational primitives demand either a
filter count that grows as $\exp(d)$ or integration over a $d$-dimensional space, again incurring
complexity exponential in $d$. Like data augmentation, these approaches can be seen as obtaining
invariance by exhaustively sampling transformations from the $d$-dimensional space of nuisances,
which seems fundamentally inefficient: in many concrete high-dimensional signal recovery problems,
\textit{optimization} provides a significant advantage over naive grid searching when exploring a
high-dimensional space \cite{Gilboa2019-px,Simchowitz2017-pe,Carmon2021-rl}, as in
\Cref{fig:centerpiece}(e). This motivates us to ask:
\begin{quote}
  \begin{center}
    \textbf{Can we break the barrier between resource-efficiency and invariance using
    \textit{optimization} as the architectural primitive, rather than sampling?}
  \end{center}
\end{quote}
In \Cref{fig:centerpiece}, we conduct a simple experiment that suggests a promising avenue to answer
this question in the affirmative. Given a known
synthetic textured motif subject to an unknown structured transformation and embedded in a
background, we calculate the number of computations (convolutions and interpolations) required to
certify with high confidence that the motif appears in the image. Our baseline approach is
template matching, which enumerates as many transformations of the input as are necessary to
certify the motif's occurrence (analogous to existing architectural approaches with
sampling/integration as the computational primitive)---each enumeration requires one interpolation
and one convolution.  We compare to a gradient-based optimization approach that attempts to match
the appearance of the test image to the motif, which uses three interpolations and several
convolutions per iteration (and on the order of $10^2$ iterations).
As the dimensionality of the space of transformations grows, the optimization-based approach
demonstrates an increasingly-significant efficiency advantage over brute-force enumeration of
templates---at affine transformations, for which $d=6$, it becomes challenging to even obtain a suitable transformation of the template by sampling.

To build from the promising optimization-based approach to local invariance in this experiment to
a full invariant neural network architecture capable of computing with realistic visual data, 
one needs a general method to incorporate prior information about the specific visual data,
observable transformations, and target task into the design of the network. We take the first
steps towards realizing this goal: inspired by classical methods for image
registration in the computer vision literature, we propose an optimization formulation for seeking
a structured transformation of an input image that matches previously-observed images, and we show
how combining this formulation with \textit{unrolled optimization}
\cite{Gregor2010-nh,Chen2021-un,Ongie2020-pu}, which converts an iterative solver for an
optimization problem into a neural network, implies resource-efficient and principled invariant
neural architectural primitives. In addition to providing network
architectures incorporating `invariance by design', this is a principled approach that leads to
networks amenable to theoretical analysis, and in particular we provide convergence guarantees for
specific instances of our optimization formulations that transfer to the corresponding unrolled
networks. On the practical side, we illustrate how these architectural primitives can be combined
into a task-specific neural network by designing and evaluating an invariant network architecture
for an idealized single-template hierarchical object detection task, and present an experimental
corroboration of the soundness of the formulation for invariant visual motif recognition used in
the experiment in \Cref{fig:centerpiece}. Taken altogether, these results demonstrate a promising
new direction to obtain theoretically-principled, resource-efficient neural networks that achieve
guaranteed invariance to structured deformations of image data.

The remainder of the paper is organized as follows: \Cref{sec:relwork} surveys the broad range of
architectural approaches to invariance that have appeared in the literature; \Cref{sec:framework}
describes our proposed optimization formulations and the unrolling approach to network design;
\Cref{sec:motif} describes the hierarchical invariant object detection task and a corresponding
invariant network architecture; \Cref{sec:theory} establishes convergence guarantees for our
optimization approach under a data model inspired by the hierarchical invariant object detection
task; and \Cref{sec:basin_experiments} provides a more detailed look at the invariance
capabilities of the formulation used in \Cref{fig:centerpiece}.

\section{Related Work}\label{sec:relwork}

\paragraph{Augmentation-based invariance approaches in deep learning.}

The `standard pipeline' to achieving invariance in deep learning described in \Cref{sec:intro}
occupies, in a certain sense, a minimal point on the tradeoff curve between a purely data-driven
approach and incorporating prior knowledge about the data into the architecture: by using a
convolutional neural network with pooling, invariance to translations of the input image (a
two-dimensional group of nuisances) is (in principle) conferred, and invariance to more complex
transformations is left up to a combination of learning from large datasets and data augmentation.
A number of architectural proposals in the literature build from a similar perspective, but occupy
different points on this tradeoff curve. Parallel channel networks \cite{Ciresan2012-ha} generate
all possible transformations of the input and process them in parallel, and have been applied for
invariance to rotations \cite{10.1093/mnras/stv632, Laptev_2016_CVPR} and scale
\cite{DBLP:journals/corr/abs-2106-06418}. Other architectures confer invariance by pooling over
transformations at the feature level \cite{Sohn2012-ml}, similarly for rotation
\cite{Worrall_2017_CVPR} and scale \cite{Kanazawa2014-zf}. Evidently these approaches become
impracticable for moderate-dimensional families of transformations, as they suffer from the same
sampling-based bottleneck as the standard pipeline.

To avoid explicitly covering the space of transformations, one can instead incorporate learned
deformation offsets into the network, as in deformable CNNs \cite{Dai2017-lj} and spatial
transformer networks \cite{10.5555/2969442.2969465}.
At a further level of generality, capsule
networks \cite{Hinton2011-yb,SabourFH17, Sun2020-mu, Hinton2021-fm} allow more flexible
deformations among distinct parts of an object to be modeled. 
The improved empirical performance observed
with these architectures in certain tasks illustrates the value of explicitly modeling
deformations in the network architecture. At the same time, when it comes to guaranteed
invariance to specific families of structured deformations, they suffer from the same exponential
inefficiencies as the aforementioned approaches. 

\paragraph{Invariance-by-construction architectures in deep learning.}

The fundamental efficiency bottleneck encountered by the preceding approaches has motivated the
development of alternate networks that are invariant simply by virtue of their constituent
computational building blocks. Scattering networks \cite{Bruna2013-on} are an especially
principled and elegant approach: they repeatedly iterate layers that convolve an input signal with
wavelet filters, take the modulus, and pool spatially. These networks provably obtain
translation invariance in the limit of large depth, with feature representations that are
Lipschitz-stable to general deformations \cite{Mallat2012-sl}; moreover, the construction and
provable invariance/stability guarantees generalize to feature extractors beyond wavelet
scattering networks \cite{Wiatowski2018-qr}. Nevertheless, these networks suffer from a similar
exponential resource inefficiency to those that plague the augmentation-based approaches: each
layer takes a wavelet transform of \textit{every} feature map at the previous layer, resulting in
a network of width growing exponentially with depth. Numerous mitigation strategies have been
proposed for this limitation \cite{Bruna2013-on, Zarka2020Deep, zarka2021separation}, and
combinations of relatively-shallow scattering networks with standard learning machines have
demonstrated competitive empirical performance on certain benchmark datasets
\cite{Oyallon_2017_ICCV}. However, the resulting hybrid networks still suffer from an inability to
handle large, structured transformations of domain such as pose and perspective changes.

Group scattering networks attempt to remedy this deficiency by replacing the spatial convolution
operation with a group convolution $w, x \mapsto [w \ast x](g) = \int_{\bbG} x(g') w(g\inv g')
\diff \mu(g')$ \cite{Mallat2012-sl,Cohen2016-vq,pmlr-v80-kondor18a,Bronstein2021-un}. In this
formula, $\bbG$ is a group with sufficient topological structure, $\mu$ is Haar measure on $\bbG$,
and $w$ and $x$ are the filter and signal (resp.), defined on $\bbG$ (or a homogeneous space for
$\bbG$, as in spherical CNNs \cite{Cohen2018-aj}). Spatial convolution of natural images coincides
with the special case $\bbG = \bbZ^2$ in this construction; for more general groups such as 3D
rotation, networks constructed by iterated group convolutions yield feature representations
equivariant to the group action, and intermixing pooling operations yields invariance, just as
with 2D convolutional neural networks.  At a conceptual level, this basic construction implies
invariant network architectures for an extremely broad class of groups and spaces admitting group
actions \cite{Weiler2021-ps}, and has been especially successful in graph-structured tasks such as
molecular prediction where there is an advantage to enforcing symmetries \cite{Bronstein2021-un}.
However, its application to visual data has been hindered by exponential inefficiencies in
computing the group convolution---integration over a $d$-dimensional group $\bbG$ costs resources
exponential in $d$---and more fundamentally by the fact that discrete images are defined on the
image plane $\bbZ^2$, whereas group convolutions require the signal to be defined over the group
$\bbG$ one seeks invariance to. In this sense, the `reflexivity' of spatial convolution and
discrete images seems to be the exception rather than the rule, and there remains a need for
resource-efficient architectural primitives for invariance with visual data.

\paragraph{``Unrolling'' iterative optimization algorithms.}

First introduced by Gregor and LeCun in the context of sparse coding \cite{Gregor2010-nh},
unrolled optimization provides a general method to convert an iterative algorithm for solving
an optimization problem into a neural network (we will provide a concrete demonstration in
the present context in \Cref{sec:framework}), offering the possibility to combine the statistical
learning capabilities of modern neural networks with very specific prior information about the
problem at hand \cite{Chen2021-un}. It has found broad use in scientific imaging and engineering
applications \cite{Kobler2017-wr,Buchanan2018-ip,8667888, 8747339,Ongie2020-pu}, and most
state-of-the-art methods for learned MRI reconstruction are based on this approach \cite{9420272}.
In many cases, the resulting networks are amenable to theoretical analysis
\cite{liu2018alista,10.5555/3327546.3327581}, leading to a mathematically-principled neural
network construction.

\section{Invariant Architecture Primitives: Optimization and Unrolling}
\label{sec:framework}

\paragraph{Notation.} We write $\bbR$ for the reals, $\bbZ$ for the integers, and $\bbN$ for the positive integers.
For positive integers $m$, $n$, and $c$, we let $\bbR^{m}$, $\bbR^{m \times n}$, and $\bbR^{m
\times n \times c}$ denote the spaces of real-valued $m$-dimensional vectors, $m$-by-$n$ matrices,
and $c$-channel $m$-by-$n$ images (resp.). We write $\ve_{i}$, $\ve_{ij}$, etc.\ to denote the
elements of the canonical basis of these spaces, and $\One_m$ and $\Zero_{m,n}$ (etc.) to denote
their all-ones and all-zeros elements (resp.). 
We write $\ip{}{}$ and $\norm{}_F$ to denote the euclidean inner product
and associated norm of these spaces.
We identify $m$ by $n$ images $\vx$ with functions on the integer grid $\set{0, 1, \dots, m-1}
\times \set{0, 1, \dots, n-1}$, and therefore index images starting from $0$; when applying
operations such as filtering, we will assume that an implementation takes the necessary zero
padding, shifting, and truncation steps to avoid boundary effects.
For a subset $\Omega \subset \bbZ^2$, we write $\sP_{\Omega}$ for the orthogonal projection onto
the space of images with support $\Omega$.

Given a deformation vector field $\vtau \in \bbR^{m' \times n' \times 2}$ and an image $\vx \in
\bbR^{m \times n \times c}$, we define the transformed image $\vx \circ \vtau$ by $(\vx \circ
\vtau)_{ij} = \sum_{(k, l) \in \bbZ^2} \vx_{kl} \phi(\tau_{ij0} - k) \phi(\tau_{ij1} - l)$, where
$\phi : \bbR \to \bbR$ is the cubic convolution interpolation kernel
\cite{Keys1981-jx}.\footnote{The function $\phi$ is compactly supported on the interval $[-2, 2]$,
and differentiable with absolutely continuous derivative.} %
For parametric transformations of the image plane, we write $\vtau_{\vA,
\vb}$ to denote the vector field representation of the transformation parameterized by $(\vA,
\vb)$, where $\vA \in \bbR^{2\times 2}$ is nonsingular and $\vb \in \bbR^2$ (see
\Cref{sec:parametric} for specific `implementation' details).
For two grayscale images $\vx \in \bbR^{m \times n}$ and $\vu \in \bbR^{ m' \times n'}$, we write
their linear convolution as $(\vx \conv \vu)_{ij} = \sum_{(k, l) \in \bbZ^2} x_{kl} u_{i-k, j-l}$.
We write $\vg_{\sigma^2} \in \bbR^{\bbZ \times \bbZ}$ to denote a (sampled) gaussian with zero
mean and variance $\sigma^2$.
When $\vx \in \bbR^{m \times n}$ and $\vu \in \bbR^c$, we write $\vx \kron \vu \in \bbR^{m \times
n \times c}$ to denote the `tensor product' of these elements, with $(\vx \kron \vu)_{ijk} =
x_{ij} u_k$. We use $\vx \circleddot \vu$ to denote elementwise multiplication of images.

\subsection{Conceptual Framework}

Given an input image $\vy \in \bbR^{m \times n \times c}$ (e.g., $c=3$ for RGB images), we consider
the following general optimization formulation for seeking a structured transformation of the
input that explains it in terms of prior observations:
\begin{equation}
  \min_{\vtau}\, \varphi(\vy \circ \vtau) + \lambda \rho(\vtau).
  \label{eq:general_formulation}
\end{equation}
Here, $\vtau \in \bbR^{m' \times n' \times 2}$ gives a vector field representation of
transformations of the image plane, and $\lambda > 0$ is a regularization tradeoff parameter. 
Minimization of the function $\varphi$ encourages the transformed input image $\vy \circ \vtau$ to
be similar to previously-observed images, whereas minimization of $\rho$ regularizes the
complexity of the learned transformation $\vtau$. Both terms allow to incorporate significant
prior information about the visual data and task at hand, and an optimal solution $\vtau$ to
\Cref{eq:general_formulation} furnishes an invariant representation of the input $\vy$.

\subsection{Computational Primitive: Optimization for Domain Transformations}
\label{sec:formulations}

We illustrate the flexibility of the general formulation \Cref{eq:general_formulation} by
instantiating it for a variety of classes of visual data. In the most basic setting, we may
consider the registration of the input image $\vy$ to a known motif $\vx_o$ assumed to be present
in the image, and constrain the transformation $\vtau$ to lie in a parametric family of
transformations $\bbT$, which yields the optimization formulation
\begin{equation}
  \min_{\vtau}\,
  \frac{1}{2} \norm*{
    \sP_{\Omega}\left[
      \vg_{\sigma^2} \conv ( \vy \circ \vtau - \vx_o )
    \right]
  }_F^2
  + \chi_{\bbT}(\vtau).
  \label{eq:formulation_reg}
\end{equation}
Here, $\Omega$ denotes a subset of the image plane corresponding to the pixels on which the motif
$\vx_o$ is supported, $\vg_{\sigma^2}$ is a gaussian filter with variance $\sigma^2$ applied
individually to each channel, and $\chi_{\bbT}(\mb\tau)$ denotes the characteristic function for
the set $\bbT$ (zero if $\vtau \in \bbT$, $+\infty$ otherwise). The parameters in
\Cref{eq:formulation_reg} are illustrated in \Cref{fig:motif}(a-d).  We do not directly implement
the basic formulation \Cref{eq:formulation_reg} in our experiments, but as a simple model for the
more elaborate instantiations of \Cref{eq:general_formulation} that follow later it furnishes
several useful intuitions. For instance, although \Cref{eq:formulation_reg} is a nonconvex
optimization problem with a `rough' global landscape, well-known results 
suggest that under idealized conditions (e.g., when $\vy = \vx_o \circ \vtau_o$ for some $\vtau_o
\in \bbT$), multiscale solvers that repeatedly solve \Cref{eq:formulation_reg} with a smoothing
level $\sigma^2_k$ then re-solve initialized at the previous optimal solution with a finer
level of smoothing $\sigma_{k+1}^2 < \sigma_k^2$ converge in a neighborhood of the
true transformation \cite{Lefebure2001-uz,Mobahi2012-to,KARYGIANNI2014232,Vural2014-xr}. This
basic fact underpins many classical computer vision methods for image registration and stitching
\cite{Brown1992-vz,Maintz1998-vz,Szeliski2007-ul, Baker2004-fh}, active appearance models for
objects \cite{Cootes1998-sk}, and optical flow estimation
\cite{Horn1981-by,Anandan1989-su,Lucas1981-ef}, and suggests that \Cref{eq:formulation_reg} is a
suitable base for constructing invariant networks.

\begin{figure}[!htb]
  \centering
    \includegraphics[width=\linewidth]{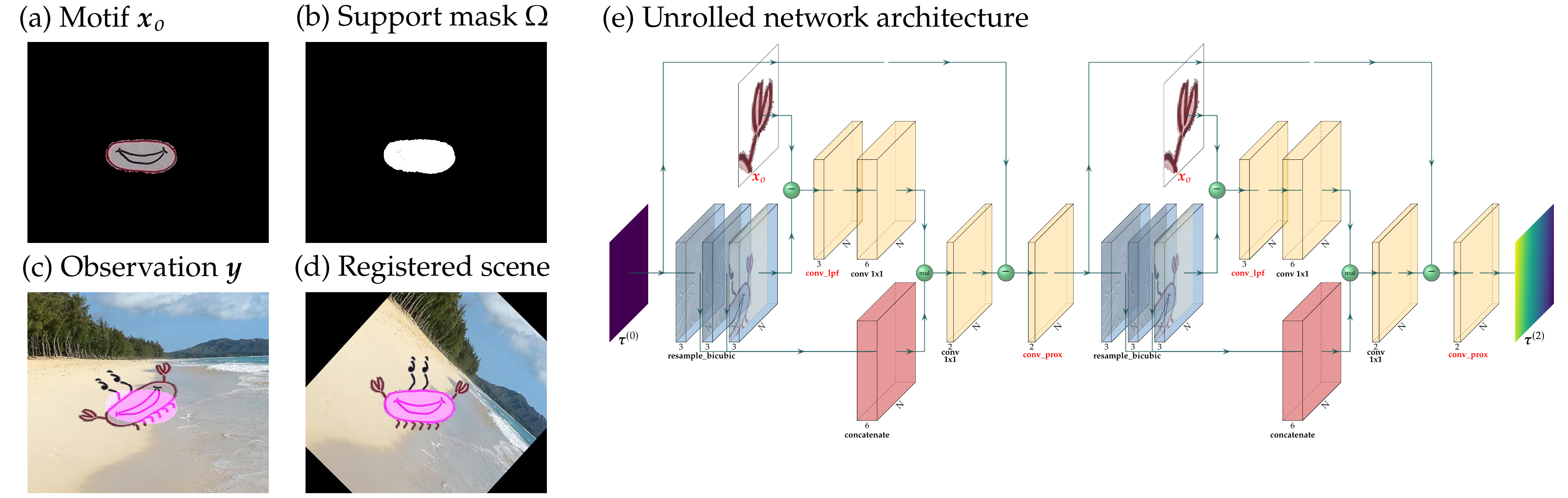}
    \caption{Motif registration with the formulation \Cref{eq:general_formulation}, and an
      unrolled solver. \textbf{(a-d)}: Visualization of components of a registration problem, such
      as \Cref{eq:formulation_reg}. We model observations $\vy$ as comprising an object involving
      the motif of interest (here, the body of the crab template we experiment with in
      \Cref{sec:motif}) on a black background, as in (a), embedded in visual clutter (here, the
      beach background) and subject to a deformation, which leads to a novel pose. A mask $\Omega$
      for the nonzero pixels of the motif, as in (b), is used to avoid having pixels corresponding
      to clutter enter the registration cost. After solving this optimization problem, we obtain a
      transformation $\vtau$ that registers the observation to the motif, as in (d). In (c-d), we
      set the red and blue pixels corresponding to the mask $\Omega$ to $1$ in order to visualize
      the relative location of the motif. \textbf{(e)}: Optimization formulations imply network
      architectures, via unrolled optimization. Here we show two iterations of an unrolled solver
      for \Cref{eq:formulation_reg}, as we detail in \Cref{sec:unrolling}; parameters that could
      be learned from data, \`{a} la unrolled optimization, are highlighted with red text. The
      operations comprising this unrolled network consist of linear maps, convolutions, and
    interpolations, leading to efficient implementation on standard hardware accelerators.}
  \label{fig:motif}
\end{figure}

For our experiments on textured visual data in \Cref{fig:centerpiece} and
\Cref{sec:motif}, we will need two elaborations of \Cref{eq:formulation_reg}. The first arises due
to the problem of obtaining invariant representations for images containing motifs $\vx_o$
appearing in general backgrounds: in such a scenario, the input image $\vy$ may contain the motif
$\vx_o$ in a completely novel scene (as in \Cref{fig:motif}(c-d)), which makes it
inappropriate to smooth the entire motif with the filter $\vg_{\sigma^2}$. In these scenarios, we
consider instead a \textit{cost-smoothed} formulation of \Cref{eq:formulation_reg}:
\begin{equation}
  \min_{\vtau}\,
  \frac{1}{2}
  \sum_{\vDelta \in \bbZ \times \bbZ}
  (\vg_{\sigma^2})_{\vDelta}
  \norm*{
    \sP_{\Omega}\left[
      \vy \circ (\vtau + \vtau_{\Zero, \vDelta}) - \vx_o
    \right]
  }_F^2
  + \chi_{\bbT}(\vtau).
  \label{eq:formulation_costsmooth}
\end{equation}
In practice, we take the sum over a finite subset of shifts $\vDelta$ on which most of the mass of
the gaussian filter lies.
This formulation is inspired by more general cost-smoothing registration proposals in the
literature \cite{Mobahi2012-to}, and it guarantees that pixels of $\vy \circ \vtau$ corresponding
to the background $\Omega^c$ are never compared to pixels of $\vx_o$ while incorporating the
basin-expanding benefits of smoothing. Second, we consider a more general formulation which also
incorporates a low-frequency model for the image background:
\begin{equation}
  \min_{\mb \tau,\, \mb \beta } \frac{1}{2} \left\| \mc
  P_{\wt{\Omega}}\left[ 
    \vg_{\sigma^2} \conv ( \vy \circ \mb \tau - \mb x_o - \mc P_{\Omega^c}[ \vg_{C\sigma^2} \conv
    \mb \beta ])
  \right]
  \right\|_F^2 + \chi_{\bbT}(\vtau).  
  \label{eq:formulation_withbg}
\end{equation}
Here, $\vbeta \in \bbR^{m \times n \times c}$ acts as a learnable model for the image background,
and $C > 1$ is a fixed constant that guarantees that the background model is at a coarser scale
than the motif and image content. The set $\wt{\Omega}$ represents a dilation by $\sigma$ of the
motif support $\Omega$, and penalizing pixels in this dilated support ensures that an optimal
$\vtau$ accounts for both foreground and background agreement.  We find background modeling
essential in computing with scale-changing transformations, such as affine transforms in
\Cref{fig:centerpiece}.

\subsection{Invariant Networks from Unrolled Optimization} \label{sec:unrolling}
The technique of unrolled optimization allows us to obtain principled network architectures from
the optimization formulations developed in \Cref{sec:formulations}. We describe the basic approach
using the abstract formulation \Cref{eq:general_formulation}. For a broad class of regularizers
$\rho$, the proximal gradient
method \cite{Parikh2014-om} can be used to attempt to solve the nonconvex problem
\Cref{eq:general_formulation}: it defines a sequence of iterates
\begin{equation}
  \mb \tau^{(t+1)} = \mr{prox}_{\lambda \nu_t \rho}\left( \mb \tau^{(t)} - \nu_t \nabla_{\mb
  \tau} \varphi(\mb y \circ \mb \tau^{(t)}) \right)
  \label{eq:prox_general}
\end{equation}
from a fixed initialization $\vtau^{(0)}$, where $\nu_t>0$ is a step size sequence and
$\mr{prox}_{\rho}(\vtau) = \argmin_{\vtau'}\frac{1}{2} \norm{\vtau - \vtau'}_F^2 + \rho(\vtau')$
is well-defined if $\rho$ is a proper convex function.
Unrolled optimization suggests to truncate this iteration after $T$ steps, and treat the iterate
$\vtau^{(T)}$ at that iteration as the output of a neural network. One can then learn certain
parameters of the neural network from datasets, as a principled approach to combining the
structural priors of the original optimization problem with the benefits of a data-driven
approach. 

In \Cref{fig:motif}(e), we show an architectural diagram for a neural network unrolled from a
proximal gradient descent solver for the registration formulation \Cref{eq:formulation_reg}. 
We always initialize our networks with $\vtau^{(0)}$ as the identity transformation field,
and in this context we have $\mathrm{prox}_{\lambda \nu_t \rho}(\vtau) =
\mathrm{proj}_{\bbT}(\vtau)$ as the nearest point in $\bbT$ to $\vtau$, which can be computed
efficiently (computational details are provided in \Cref{sec:parametric}). The cost
\Cref{eq:formulation_reg} is differentiable; calculating its gradient as in \Cref{sec:grads},
\Cref{eq:prox_general} becomes
\begin{equation}
  \mb \tau^{(t+1)} = \mr{proj}_{\bbT}\left( 
    \mb \tau^{(t)} - \nu_t \sum_{k=0}^{c-1} 
    \left(\vg_{\sigma^2} \conv \sP_{\Omega}\left[
        \vg_{\sigma^2} \conv \left( \vy \circ \vtau^{(t)} - \vx_o \right)_k
    \right] \kron \One_2 \right) 
    \circleddot \left(\diff \vy_k \circ \vtau^{(t)}\right) %
  \right),
  \label{eq:registration_unrolled}
\end{equation}
as we represent visually in \Cref{fig:motif}(e), where a subscript of $k$ denotes the $k$-th
channel of the image and $\diff \vy \in \bbR^{m \times n \times c \times 2}$ is the Jacobian
matrix of $\vy$.
The constituent operations in this network
are convolutions, pointwise nonlinearities and linear maps, which lend themselves ideally to
implementation in standard deep learning software packages and on hardware accelerators; and
because the cubic convolution interpolation kernel $\phi$ is twice continuously differentiable
except at four points of $\bbR$, these networks are end-to-end differentiable and can be
backpropagated through efficiently. The calculations necessary to instantiate unrolled network
architectures for other optimization formulations used in our experiments are deferred to
\Cref{sec:grads}. A further advantage of the unrolled approach to network design is that
hyperparameter selection becomes directly connected to convergence properties of the optimization
formulation \Cref{eq:general_formulation}: we demonstrate how theory influences these selections
in \Cref{sec:theory}, and provide practical guidance for registration and detection problems
through our experiments in \Cref{sec:motif,sec:mcs_experiments,sec:basin_experiments}.

\section{Invariant Networks for Hierarchical Object Detection} \label{sec:motif}

The unrolled networks in \Cref{sec:framework} represent architectural primitives for building
deformation-invariant neural networks: they are effective at producing invariant representations
for input images containing local motifs. In this section, we illustrate how these local modules
can be combined into a network that performs invariant processing of nonlocally-structured visual
data, via an invariant hierarchical object detection task with a fixed template. %
For simplicity, in this section we will focus on the setting where $\bbT$ is the set of rigid
motions of the image plane (i.e., translations and rotations), which we will write as $\SE(2)$.

\subsection{Data Model and Problem Formulation} \label{sec:hier_data_model}

We consider an object detection task, where the objective is to locate a fixed template with
independently-articulating parts (according to a $\SE(2)$ motion model) in visual clutter. More
precisely, we assume the template is decomposable into a hierarchy of deformable parts, as in
\Cref{fig:detection}(a): at the top level of the
hierarchy is the template itself, with concrete visual motifs at the lowest levels that
correspond to specific pixel subsets of the template, which constitute independent parts. Because
these constituent parts deform independently of one another, detecting this template
\textit{efficiently} demands an approach to detection that captures the specific hierarchical
structure of the template.\footnote{Reasoning as in \Cref{sec:intro}, the effective dimension of
  the space of all observable transformations of the object is the product of the dimension of the
  motion model and the number of articulating parts. A detector that exploits the hierarchical
  structure of the object effectively reduces the dimensionality to $\dim(\mathrm{motion\,model})
  + \log(\mathrm{number\,of\,parts})$, yielding a serious advantage for moderate-dimensional
families of deformations.}
Compared to existing methods for parts-based object detection that are formulated to work with
objects subject to complicated variations in appearance
\cite{Felzenszwalb2008-ui,Felzenszwalb2010-gv,Girshick2015-qw,Pedersoli2015-wx}, focusing on the
simpler setting of template detection allows us to develop a network that guarantees invariant
detection under the motion model, and can incorporate large-scale statistical learning techniques
by virtue of its unrolled construction (although we leave this latter direction for future work).
We note that other approaches are possible, such as hierarchical sparse modeling
\cite{Jenatton2010-oj, Bar2010-iq} or learning a graphical model \cite{Sutton2012-fe}.

More formally, we write $\vy_o \in \bbR^{m_o \times n_o \times 3}$ for the RGB image corresponding
to a canonized view of the template to be detected (e.g., the crab at the top of the hierarchy in 
\Cref{fig:detection}(a) left) embedded on a black background. For a $K$-motif object (e.g., $K=4$
for the crab template), we let $\vx_k \in \bbR^{m_k \times n_k \times 3}$ denote the $k$ distinct
(canonized, black-background-embedded) transforming motifs in the object, each with
non-overlapping occurrence coordinates $(i_k, j_k) \in \set{0, \dots, m_o} \times \set{0, \dots,
n_o}$.
The template $\vy_o$ decomposes as
\begin{equation}
  \vy_o = \underbrace{\sum_{k=1}^K\vx_k \conv \ve_{i_k j_k}}_{\text{transforming motifs}} 
  + \underbrace{\vy_o - \sum_{k=1}^K \vx_k \conv \ve_{i_k j_k}}_{\text{static body}}.
  \label{eq:linked_rigid_body_1level}
\end{equation}
For example, the four transforming motifs for the crab template in \Cref{fig:detection}(a) are the
two claws and two eyes. In our experiments with the crab template, we
will consider detection of transformed templates $\vy_{\mathrm{obs}}$ of the following form:
\begin{equation}
  \vy_{\mathrm{obs}} = \left[
    \sum_{k=1}^K (\vx_k \conv \ve_{i_k j_k} )\circ \vtau_k 
    + \left(\vy_o - \sum_{k=1}^K \vx_k \conv \ve_{i_k j_k}\right)
  \right] \circ \vtau_0,
  \label{eq:linked_rigid_body_1level_obs}
\end{equation}
where $\vtau_0 \in \SE(2)$, and $\vtau_k \in \SO(2)$ is sufficiently close to the identity
transformation (which represents the physical constraints of the template). The detection task is
then to decide, given an input scene $\vy \in \bbR^{m \times n \times 3}$ containing visual
clutter (and, in practice, $m \gg m_o$ and $n \gg n_o$), whether or not a transformed instance
$\vy_{\mathrm{obs}}$ appears in $\vy$ or not, and to output estimates of its transformation
parameters $\vtau_k$.

Although our experiments will pertain to the observation model
\Cref{eq:linked_rigid_body_1level_obs}, as it agrees with our decomposition of the crab template in
\Cref{fig:detection}(a), the networks we construct in \Cref{sec:hier_networks} will be amenable to
more complex observation models where parts at intermediate levels of the hierarchy also
transform.\footnote{For example, consider a simple extension of the crab template in
  \Cref{fig:detection}(a), where the left and right claw motifs are further decomposed into two
  pairs of pincers plus the left and right arms, with opening and closing motions for the pincers,
  and the same $\SO(2)$ articulation model for the arms (which naturally moves the pincers in
accordance with the rotational motion).} To this end, we introduce additional notation that
captures the hierarchical structure of the template $\vy_o$.
Concretely, we identify a hierarchically-structured template with a directed rooted tree $G =
(V, E)$, with $0$ denoting the root node, and $1, \dots, K$ denoting the $K$ leaf nodes. Our
networks will treat observations of the form 
\begin{equation}
  \vy_{\mathrm{obs}} = 
  \sum_{k=1}^K ((\cdots(((\vx_k \conv \ve_{i_k j_k} )\circ 
  \vtau_k) \circ \vtau_{v_{d(k)-1}}) \circ \cdots ) \circ \vtau_{v_{1}}) \circ \vtau_{0}
  + 
  \left(\vy_o - \sum_{k=1}^K \vx_k \conv \ve_{i_k j_k}\right) \circ \vtau_0,
  \label{eq:linked_rigid_body_dlevel_obs}
\end{equation}
where $d(k)$ is the depth of node $k$, and $v_1, \dots, v_{d(k)-1} \in V$ with 
$0 \to v_{1} \to \dots \to v_{d(k)-1} \to k$ specifying the path from the root node to node $k$ in
$G$. To motivate the observation model \Cref{eq:linked_rigid_body_dlevel_obs}, consider the crab
example of \Cref{fig:detection}(a), where in addition we imagine the coordinate frame of the eye
pair motif transforms independently with a transformation $\vtau_5$: in this case, the observation
model \Cref{eq:linked_rigid_body_dlevel_obs} can be written in an equivalent `hierarchical' form 
\begin{equation*}
  \vy_{\mathrm{obs}} =
  \left[
    (\vx_1 \conv \ve_{i_1 j_1}) \circ \vtau_1
    + (\vx_2 \conv \ve_{i_2 j_2}) \circ \vtau_2
    + \left[
      (\vx_3 \conv \ve_{i_3 j_3}) \circ \vtau_3
      + (\vx_4 \conv \ve_{i_4 j_4}) \circ \vtau_4
    \right] \circ \vtau_5
  \right] \circ \vtau_0 + \vy_{\mathrm{body}} \circ \vtau_0,
\end{equation*}
by linearity of the interpolation operation $\vx \mapsto \vx \circ \vtau$ (with
$\vy_{\mathrm{body}} = \vy_o - \sum_k \vx_k \conv \ve_{i_k j_k}$).

\subsection{Aside: Optimization Formulations for Registration of ``Spiky'' Motifs}
\label{sec:spike_reg}

To efficiently perform hierarchical invariant detection of templates following the model
\Cref{eq:linked_rigid_body_dlevel_obs}, the networks we design will build from the following
basic paradigm, given an input scene $\vy$:
\begin{enumerate}
  \item \textbf{Visual motif detection}: First, perform detection of all of the lowest-level
    motifs $\vx_1, \dots, \vx_K$ in $\vy$.  The output of this process is an \textit{occurrence
    map} for each of the $K$ transforming motifs, i.e.\ an $m \times n$ image taking
    (ideally) value $1$ at the coordinates where detections occur and $0$ elsewhere.
  \item \textbf{Spiky motif detection for hierarchical motifs}: Detect intermediate-level
    abstractions using
    the occurrence maps in $\vy$ obtained in the previous step. For example, if $k = 3$ and $k =
    4$ index the left and right eye motifs in the crab template of \Cref{fig:detection}(a),
    detection of the eye pair motif corresponds to registration of the canonized eye pair's
    occurrence map against the two-channel image corresponding to the concatenation of $\vx_3$ and
    $\vx_4$'s occurrence maps in $\vy$.
  \item \textbf{Continue until the top of the hierarchy}: This occurrence map detection process is
    iterated until the top level of the hierarchy. For example, in \Cref{fig:detection}(a), a
    detection of the crab template occurs when the multichannel image corresponding to the
    occurrence maps for the left and right claws and the eye pair motif is matched.
\end{enumerate}

To instantiate this paradigm, we find it necessary to develop a separate registration formulation
for registration of occurrence maps, beyond the formulations we have introduced in
\Cref{sec:framework}. Indeed, occurrence maps contain no texture information and are maximally
localized, motivating a formulation that spreads out gradient information and avoids interpolation
artifacts---and although there is still a need to cope with clutter in general, the fact that the
occurrence maps are generated on a black background obviates the need for extensive background
modeling, as in \Cref{eq:formulation_withbg}. We therefore consider the following ``complementary
smoothing'' formulation for spike registration: for a $c$-channel occurrence map $\vy$ and
canonized occurrence map $\vx_o$, we optimize over the affine group $\mathrm{Aff}(2) = \GL(2)
\rtimes \bbR^2$ as
\begin{equation}
  \min_{\vA, \vb}\,
  \frac{1}{2c} \left\| \vg_{\sigma^2
    \mb I - \sigma_0^2 \vA \vA\adj} \conv \left( \mr{det}^{-1/2}( \mb A \mb A\adj ) \left(
    \vg_{\sigma_0^2 \mb I} \conv \vy \right) \circ \mb \tau_{\mb A^{-1}, -\mb \vA^{-1} \vb}
  \right) - \vg_{\sigma^2 \mb I} \conv {\mb x_o} \right\|_{F}^2 + \chi_{\mathrm{Aff}(2)}(\vA, \vb), 
  \label{eq:complementary_smoothing}
\end{equation}
where $\vg_{\vM}$ denotes a single-channel centered gaussian filter with positive definite
covariance matrix $\vM \succ 0$, and correlations are broadcast across channels.  Here, $\sigma >
0$ is the main smoothing parameter to propagate gradient information, and $\sigma_0 > 0$ is an
additional smoothing hyperparameter to mitigate interpolation artifacts.

In essence, the key modifications in \Cref{eq:complementary_smoothing} that make it amenable to
registration of occurrence maps are the compensatory effects for scaling that it introduces:
transformations that scale the image correspondingly reduce the amplitude of the (smoothed)
spikes, which is essential given the discrete, single-pixel spike images we will register.
Of course, since we consider only euclidean transformations in our experiments in this section, we
always have $\vA \vA\adj = \vI$, and the problem \Cref{eq:complementary_smoothing} can be
implemented in a simpler form.  However, these modifications lead the problem
\Cref{eq:complementary_smoothing} to work excellently for scale-changing transformations as well:
we explore the reasons behind this from both theoretical and practical perspectives in
\Cref{sec:theory}.

\begin{figure*}[!htb]
  \centering
  \includegraphics[width=\textwidth]{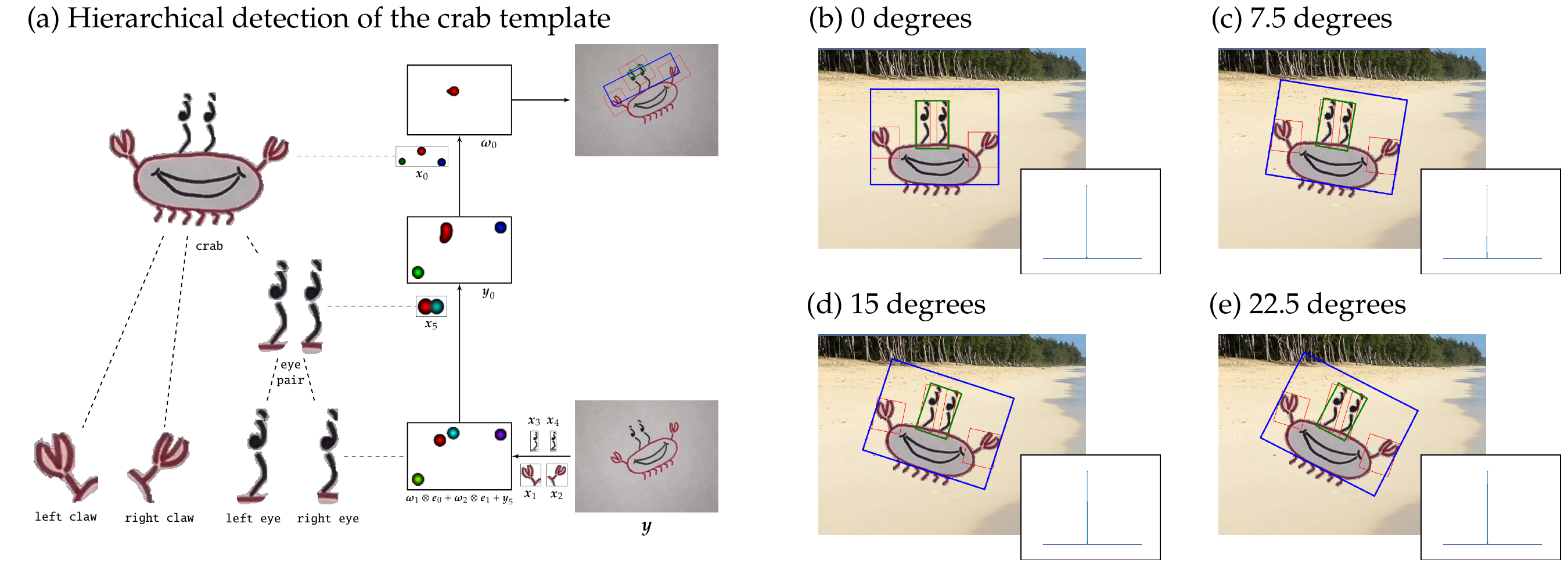}
  \caption{An example of a hierarchically-structured template, and the results of an implementation
    of our detection network. \textbf{(a)}: Structure of the crab template, described in
    \Cref{sec:hier_data_model}, and its interaction with our network architecture for detection,
    described in \Cref{sec:hier_networks}. \textit{Left: top-down decomposition of the template into
    motifs.} A template of interest $\vy_o$ (here, the crab at top left) is decomposed into
    a hierarchy of abstractions. The hierarchical structure is captured by a tree $G = (V, E)$:
    nodes represent parts or aggregations of parts, and edges represent their relationships.
    \textit{Right: bottom-up detection of the template in a novel scene.} To detect the template
    in a novel scene and pose, the network described in \Cref{sec:hier_networks} first localizes
    each of the lowest-level visual motifs at left and their transformation parameters in the
    input scene $\vy$ (bottom right). Motifs and the derived occurrence maps are labeled in
    agreement with the notation we introduce in \Cref{sec:hier_networks}.
    The output of each round of optimization is an occurrence map $\vomega_v$ for
    nodes $v \in V$; these occurrence maps then become the inputs for detection of the next level
    of concepts, following the connectivity structure of $G$, until the top-level template is
    reached (top right). \textbf{(b-e)}: Concrete results for the hierarchical invariant object
    detection network implemented in \Cref{sec:hier_results}: the learned transformation at the
    minimum-error stride for each motif is used to draw the motifs' transformed bounding boxes.
    Insets at the bottom right corner of each result panel visualize the quality of the final
    detection trace $\vomega_{0}$ for the template, with a value of $1$ at the top of the inset. %
  }
  \label{fig:detection}
\end{figure*}

\subsection{Invariant Network Architecture} \label{sec:hier_networks}

The networks we design to detect under the observation model
\Cref{eq:linked_rigid_body_dlevel_obs} consist of a configuration of unrolled motif registration
networks, as in \Cref{fig:motif}(e), arranged in a `bottom-up' hierarchical fashion, following the
hierarchical structure in the example shown in \Cref{fig:detection}(a). The configuration for each
motif registration sub-network is a `GLOM-style' \cite{Hinton2021-fm} collection of the networks
sketched
in \Cref{fig:motif}(e), oriented at different pixel locations in the input scene $\vy$; the
transformation parameters predicted of each of these configurations are aggregated across the
image, weighted by the final optimization cost (as a measure of quality of the final solution), in order to determine
detections. These detections are then used as feature maps for the next level of occurrence
motifs, which in turn undergo the same registration-detection process until reaching the top-level
object's occurrence map, which we use as a solution to the detection problem.  A suitable unrolled
implementation of the registration and detection process leads to a network that is end-to-end
differentiable and amenable to implementation on standard hardware acceleration platforms
(see \Cref{sec:hier_results}).

We now describe this construction formally, following notation introduced in
\Cref{sec:hier_data_model}. The network input is an RGB image $\vy \in \bbR^{m \times n \times
3}$.  We shall assume that the canonized template $\vy_o$ is given, as are
as the canonized visual motifs $\vx_1, \dots, \vx_K$ and their masks $\Omega_1, \dots, \Omega_K$;
we also assume that for every $v \in V$ with $v \not \in \set{1, \dots, K}$, we are given the
canonized occurrence map $\vx_v \in \bbR^{m_v \times n_v \times c_v}$ of the hierarchical feature
$v$ in $\vy_o$. In practice, one obtains these occurrence maps through a process of
``extraction'', using $\vy_o$ as an input to the network, which we describe in
\Cref{sec:hier_preprocessing}. The network construction can be separated into three distinct
steps:

\paragraph{Traversal.} The network topology is determined by a simple traversal of the graph $G$.
For each $v \in V$, let $d(v)$ denote the shortest-path distance from $0$ to $v$, with
unit weights for edges in $E$ (the ``depth'' of $v$ in $G$).  We will process motifs in a
deepest-first order for convenience, although this is not strictly necessary in all cases (e.g.\
for efficiency, it might be preferable to process all leaf nodes $1, \dots, K$ first). Let
$\diam(G) = \max_{v \in V} d(v)$, and for an integer $\ell$ no larger than $\diam(G)$, we let
$D(\ell) \in \bbN$ denote the number of nodes in $V$ that are at depth $\ell$.

\paragraph{Motif detection at one depth.} Take any integer $0 \leq \ell \leq \diam(G)$, and let
$v_1, \dots, v_{D(\ell)}$ denote the nodes in $G$ at depth $\ell$, enumerated in increasing order
(say). For each such vertex $v_k$, perform the following steps:

\subparagraph{1. Is this a leaf?} If the neighborhood $\set{v' \given (v_k, v') \in E}$ is empty,
this node is a leaf; otherwise it is not. Subsequent steps depend on this distinction. We
phrase the condition more generally, although we have defined $1, \dots, K$ as the leaf
vertices here, to facilitate some implementation-independence.

\subparagraph{2. Occurrence map aggregation for non-leaves:} If $v_k$ is not a leaf, construct
its detection feature map from lower-level detections: concretely, let
\begin{equation}
  \vy_{v_k} = \sum_{v'\,:\,(v_k, v') \in E} \vomega_{v'} \kron \ve_{\pi_{v_k}(v')},
  \label{eq:multichannel_spike_map}
\end{equation}
where $\pi_{v_k}(v')$ denotes a vertex-increasing-order enumeration of the set $\set{v'\,:\,(v_k,
v') \in E}$ starting from $0$. By construction (see the fourth step below), $\vy_{v_k}$ has
the same width and height as the input scene $\vy$, but $c_{v_k} = \abs{\set{v'\,:\,(v_k,
v') \in E}}$ channels (one for each child node) instead of $3$ RGB channels.

\subparagraph{3. Perform strided registration:} Because the motif $\vx_{v_k}$ is in general much
smaller in size than the scene $\vy_{v_k}$, and because the optimization formulation
\Cref{eq:general_formulation} is generally nonconvex with a finite-radius basin of attraction
around the true transformation parameters in the model \Cref{eq:linked_rigid_body_dlevel_obs},
the detection process consists of a search for $\vx_{v_k}$ anchored at a grid of points in
$\vy_{v_k}$. Concretely, let 
\begin{equation*}
  \Lambda_{v_k} = \set{(i\Delta_{H, v_k}, j\Delta_{W, v_k}) \given
  (i, j) \in \set{0, \dots, m-1} \times \set{0, \dots, n-1} } \cap (\set{0, \dots, m-1} \times
  \set{0, \dots, n-1})
\end{equation*}
denote the grid for the $v_k$-th motif; here $\Delta_{H, v_k}$ and
$\Delta_{W, v_k}$ define the vertical and horizontal stride lengths of the grid (we discuss
choices of these and other hyperparameters introduced below in \Cref{sec:hier_preprocessing}).
When $v_k$ is a leaf, 
for each $\vlambda \in \Lambda_{v_k}$, we let $(\vU(v_k, \vlambda), \vb(v_k, \vlambda)) \in
\SE(2)$ denote the parameters obtained after running an unrolled solver for the cost-smoothed
visual motif registration problem
\begin{equation}
  \min_{\vtau}\,
  \frac{1}{2}
  \sum_{\vDelta \in \bbZ \times \bbZ}
  (\vg_{\sigma_{v_k}^2})_{\vDelta}
  \norm*{
    \sP_{\Omega_{v_k}}\left[
      \left(
        \vg_{\sigma_{\mathrm{in}}^2} \conv \vy 
      \right)
      \circ (\vtau + \vtau_{\Zero, \vDelta + \vlambda}) - \vx_{v_k}
    \right]
  }_F^2
  + \chi_{\SE(2)}(\vtau),
  \label{eq:hier_strided_textured}
\end{equation}
for $T_{v_k}$ iterations, with step size $\nu_{v_k}$.
In addition, we employ a two-step multiscale smoothing strategy, which involves initializing
an unrolled solver for \Cref{eq:hier_strided_textured} with a much smaller smoothing parameter
$(\sigma_{v_k}')^2$ at $(\vU(v_k, \vlambda), \vb(v_k, \vlambda))$ and running it for an
additional fixed number of iterations; we let
$\mathrm{loss}(v_k, \vlambda)$ denote the final objective function value after this multiscale
process, and abusing notation, we let $(\vU(v_k, \vlambda), \vb(v_k, \vlambda))$ denote the
updated final parameters . Precise implementation details are discussed in
\Cref{sec:hier_preprocessing}.  When $v_k$ is not a leaf, we instead define the same fields on
the grid $\Lambda_{v_k}$ via a solver for the spike registration problem
\begin{equation}
  \min_{\vtau}\,
  \frac{1}{2c_{v_k}} 
  \left\| \sP_{\Omega_{v_k}} \left[
    \vg_{\sigma_{v_k}^2 - \sigma_{0, v_k}^2} \conv 
    \left(
      \vy_{v_k} \circ (\vtau + \vtau_{\Zero, \vlambda})
      - \vx_{v_k}
    \right)
  \right]
  \right\|_{F}^2
  + \chi_{\SE(2)}(\vtau), 
  \label{eq:hier_strided_spiky}
\end{equation}
with $\Omega_{v_k}$ denoting a dilated bounding box for $\vx_{v_k}$, and otherwise the same
notation and hyperparameters. We do not use multiscale smoothing for non-leaf motifs.

\subparagraph{4. Aggregate registration outputs into detections (occurrence maps):} We convert the
registration fields into detection maps, by computing
\begin{equation}
  \vomega_{v_k}
  =
  \sum_{\vlambda \in \Lambda_{v_k}}
  \left(
    \vg_{\sigma_{0, v_k}^2} \conv \ve_{\vlambda + \vb(v_k, \vlambda)}
  \right)
  \exp\Bigl( 
    - \alpha_{v_k} \max \set*{0, \mathrm{loss}(v_k, \vlambda) - \gamma_{v_k}}
  \Bigr),
  \label{eq:hier_strided_occmaps}
\end{equation}
where each summand $\vg_{\sigma_{0, v_k}^2} \conv \ve_{\vlambda + \vb(v_k, \vlambda)}$ is
truncated to be size $m \times n$.\footnote{This convolutional notation is of course an abuse
  of notation, to avoid having to define a gaussian filter with a general mean parameter. In
  practice, this latter technique is both more efficient to implement and leads to a
stably-differentiable occurrence map.}
The scale and threshold parameters $\alpha_{v_k}$ and $\gamma_{v_k}$ appearing in this formula
are calibrated to achieve a specified level of performance under an assumed maximum level of
visual clutter and transformation for the observations \Cref{eq:linked_rigid_body_dlevel_obs},
as discussed in \Cref{sec:hier_preprocessing}. 

We prefer to embed detections as occurrence maps and use these as inputs for higher-level
detections using optimization, rather than a possible alternate approach (e.g.\ extracting
landmarks and processing these using group synchronization), in order to have each occurrence map
$\vomega_{v}$ for $v \in V$ be differentiable with respect to the various filters and
hyperparameters.

\paragraph{Template detection.} To perform detection given an input $\vy$, we repeat the four
steps in the previous section for each motif depth, starting from depth $\ell=\diam(G)$, and each
motif at each depth. After processing depth $\ell=0$, the output occurrence map $\vomega_0$ can be
thresholded to achieve a desired level of detection performance for observations of the form
\Cref{eq:linked_rigid_body_dlevel_obs}. The detection process is summarized as
\Cref{alg:hier_detection}.

By construction, this output $\vomega_0$ can be
differentiated with respect to each node $v \in V$'s hyperparameters or filters, and the unrolled
structure of the sub-networks and $G$'s topology can be used to efficiently calculate such
gradients via backpropagation. In addition, although we do not use the full transformation
parameters $\vU(\vlambda, v)$ calculated in the registration operations
\Cref{eq:hier_strided_textured,eq:hier_strided_spiky}, these can be leveraged for various purposes
(e.g.\ drawing detection bounding boxes, as in our experimental evaluations in
\Cref{sec:hier_results}).

\begin{algorithm}[tb]
  \caption{Invariant Hierarchical Motif Detection Network, Summarizing \Cref{sec:hier_networks}}
  \begin{algorithmic}
    \INPUT scene $\vy$, graph $G = (V, E)$, motifs $(\vx_v, \Omega_v)_{v\in V}$, hyperparameters
    $(\nu_v, T_v, \Delta_{H, v}, \Delta_{W, v}, \sigma_v^2, \sigma_{0,v}^2, \alpha_v, \gamma_v
    )_{v \in V}$
    \STATE set $\diam(G)$ and node enumerations %
    by depth-first traversal of $G$
    \FORALL{depths $\ell = \diam(G), \diam(G) - 1, \dots, 0$}
    \FORALL{nodes $v$ at depth $\ell$}
    \STATE set $N_v = \set{v' \given (v, v') \in E}$ and $c_v = \abs{N_v}$ 
    \IF{$c_v > 0$}
    \STATE concatenate occurrence maps into $\vy_{v} = \sum_{v' \in N_v} \vomega_{v'} \kron
    \ve_{\pi_{v}(v')}$
    \ENDIF
    \FORALL{$\vlambda \in \Lambda_v(\Delta_{H, v}, \Delta_{W, v})$}
    \IF{$c_v > 0$}
    \STATE set $\vU(v, \vlambda), \vb(v, \vlambda)  = \argmin_{\vtau}
    \frac{1}{2c_{v}} 
    \norm{ \vg_{\sigma_{v}^2 - \sigma_{0, v}^2} \conv 
      ( \vy_{v} \circ (\vtau + \vtau_{\Zero, \vlambda}) - \vx_{v})
    }_{F}^2
    + \chi_{\SE(2)}(\vtau)$ 
    \STATE set $\mathrm{loss}(v, \vlambda) = \min_{\vtau}
    \frac{1}{2c_{v}} 
    \norm{ \vg_{\sigma_{v}^2 - \sigma_{0, v}^2} \conv 
      ( \vy_{v} \circ (\vtau + \vtau_{\Zero, \vlambda}) - \vx_{v})
    }_{F}^2
    + \chi_{\SE(2)}(\vtau)$ 
    \STATE (both with a $T_v$-layer unrolled solver)
    \ELSE
    \STATE set $\vU(v, \vlambda), \vb(v, \vlambda) = \argmin_{\vtau} \tfrac{1}{2}
    \sum_{\vDelta}
    (\vg_{\sigma_{v}^2})_{\vDelta}
    \norm{
      \sP_{\Omega_{v}}[ ( \vg_{\sigma_{\mathrm{in}}^2} \conv \vy ) \circ (\vtau + \vtau_{\Zero,
      \vDelta + \vlambda}) - \vx_{v} ]
    }_F^2
    + \chi_{\SE(2)}(\vtau)$
    \STATE set $\mathrm{loss}(v, \vlambda) = \min_{\vtau} \tfrac{1}{2}
    \sum_{\vDelta}
    (\vg_{\sigma_{v}^2})_{\vDelta}
    \norm{
      \sP_{\Omega_{v}}[ ( \vg_{\sigma_{\mathrm{in}}^2} \conv \vy ) \circ (\vtau + \vtau_{\Zero,
      \vDelta + \vlambda}) - \vx_{v} ]
    }_F^2
    + \chi_{\SE(2)}(\vtau)$
    \STATE (both with a $T_v$-layer unrolled solver, with two-round multiscale smoothing)
    \ENDIF
    \ENDFOR{}
    \STATE construct the occurrence map $ \vomega_{v} = \sum_{\vlambda \in \Lambda_{v}}
    ( \vg_{\sigma_{0, v}^2} \conv \ve_{\vlambda + \vb(v, \vlambda)})
    \exp( - \alpha_{v} \max \set{0, \mathrm{loss}(v, \vlambda) - \gamma_{v}}) $
    \ENDFOR{}
    \ENDFOR{}
    \OUTPUT template occurrence map $\vomega_0$
  \end{algorithmic}
  \label{alg:hier_detection}
\end{algorithm}

\subsection{Implementation and Evaluation} \label{sec:hier_results}
We implement the hierarchical invariant object detection
network described in \Cref{sec:hier_networks} in PyTorch \cite{NEURIPS2019_9015}, and test it for
detection of the crab template from \Cref{fig:detection}(a) subject to a global rotation (i.e.,
$\vtau_0$ in the model \Cref{eq:linked_rigid_body_dlevel_obs}) of varying size
(\Cref{fig:detection}(b-e)). 
In $512 \times 384$ pixel scenes on a ``beach'' background, a
calibrated detector perfectly detects the crab from its constituent parts up to rotations of
$\pi/8$ radians---at rotations around $\pi/6$, a multiple-instance issue due to similarity between
the two eye motifs begins to hinder the detection performance.  Traces in each panel of
\Cref{fig:detection}, right demonstrate the precise localization of the template.

For hyperparameters, we set $T_v = 1024$ and $\Delta_{H,v} = \Delta_{W,v} = 20$ for all $v \in V$,
and calibrate detection parameters as described in \Cref{sec:hier_preprocessing};
for visual motifs, we calibrate the remaining registration hyperparameters as described in
\Cref{sec:hier_preprocessing} on a per-motif basis, and for spike motifs, we find the
prescriptions for $\sigma_v^2$ and the step sizes $\nu_v$ implied by theory (\Cref{sec:theory}) to
work excellently without any fine-tuning.
We also implement selective filtering of strides for spiky motif alignment that are unlikely to
succeed: due to the common background, this type of screening is particularly effective here.
The strided registration formulations \Cref{eq:hier_strided_textured,eq:hier_strided_spiky} afford
efficient batched implementation on a hardware accelerator, given that the motifs $\vx_v$ for $v
\in V$ are significantly smaller than the full input scene $\vy$, and the costs only depend on
pixels near to the motifs $\vx_v$.
On a single NVIDIA TITAN X Pascal GPU accelerator (12 GB memory), it takes approximately
five minutes to complete a full detection. We expect throughput to be further improvable without
sacrificing detection performance by decreasing the maximum iterations for each unrolled network
$T_v$ even further---the setting of $T_v = 1024$ is conservative, with convergence typically much
more rapid. Our implementation is available at \codeurl.

\section{Guaranteed, Efficient Detection of Occurrence Maps} \label{sec:theory}
In \Cref{sec:motif}, we described how invariant processing of hierarchically-structured visual
data naturally leads to problems of registering `spiky' occurrence maps, and we introduced the
formulation \Cref{eq:complementary_smoothing} for this purpose. In this section, we provide a
theoretical analysis of a continuum model for the proximal gradient descent method applied to the
optimization formula \Cref{eq:complementary_smoothing}.
A byproduct of our analysis is a concrete prescription for the step size and rate of
smoothing---in \Cref{sec:mcs_experiments}, we demonstrate experimentally that these prescriptions
work excellently for the discrete formulation \Cref{eq:complementary_smoothing}, leading to rapid
registration of the input scene.

\begin{figure*}[!htb]
  \centering
  \includegraphics[width=0.9\linewidth]{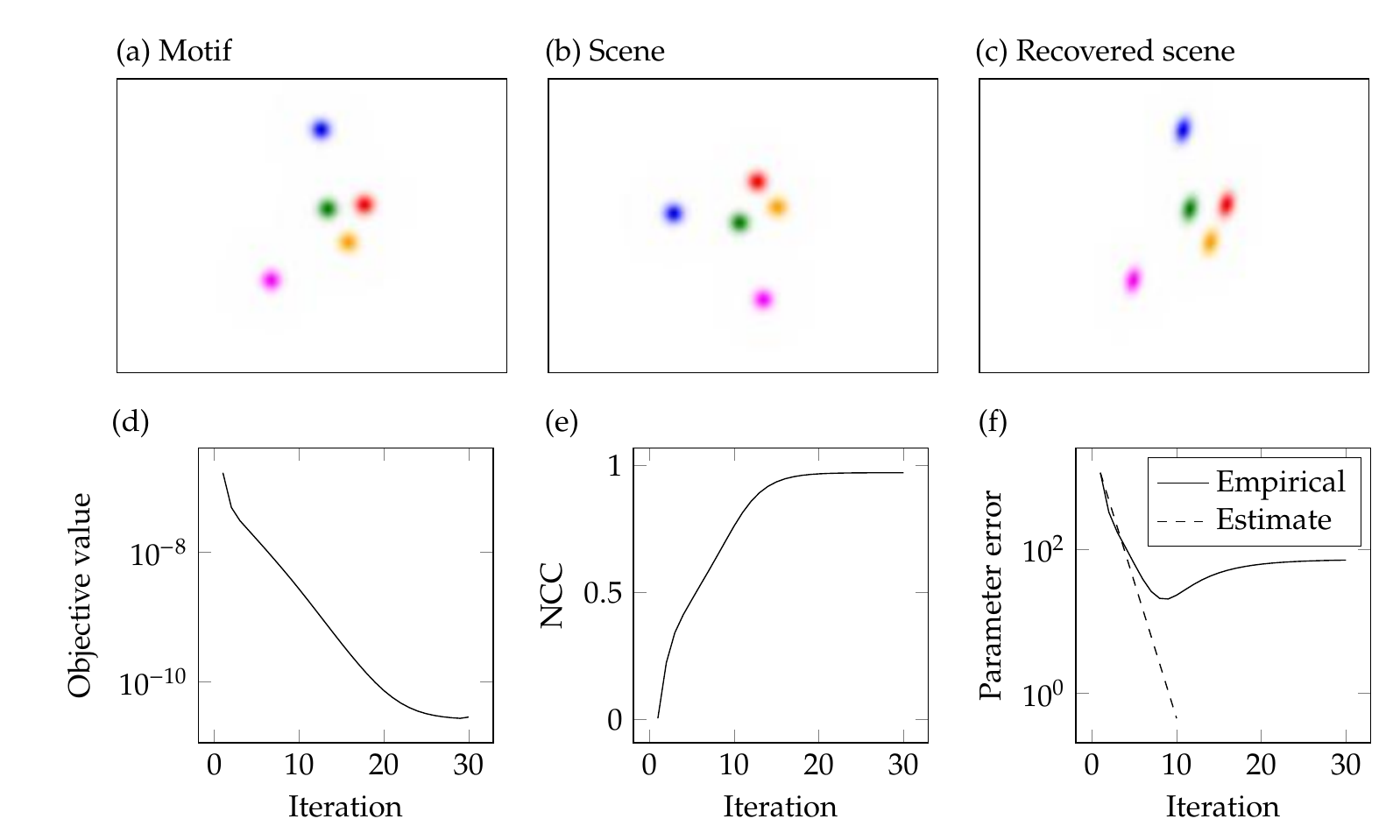}
  \caption{Numerical verification of \Cref{thm:multichannel-spike}. \textbf{(a):} A multichannel
    spike motif containing 5 spikes. \textbf{(b):} A scene generated by applying a random affine
    transformation to the motif. \textbf{(c):} The solution to \Cref{eq:complementary_smoothing}
    with these data. The skewing apparent here is undone by the compensated external gaussian
    filter, which enables accurate localization in spite of these artifacts.  \textbf{(d):} Change
    in objective value of \Cref{eq:complementary_smoothing} across iterations of proximal gradient
    descent. Convergence occurs in tens of iterations. \textbf{(e):} Change in normalized cross
    correlation across iterations (see \Cref{sec:extra_experiment_details}). We observe that the
    method successfully registers the multichannel spike scene. \textbf{(f):} Comparison between
    the left and right-hand side of equation \Cref{eq:opt-main-boundmain} with gradient descent
    iterates from \Cref{eq:complementary_smoothing} (labeled as $\varphi$ here). After an initial
    faster-than-predicted linear rate, the discretized solver saturates at a sub-optimal level.
    This is because because accurate estimation of the transformation parameters $(\mb A, \mb b)$
    requires subpixel-level preciseness, which is affected by discretization and interpolation
    artifacts. It does not hinder correct localization of the scene, as (e) shows.}
  \label{fig:convergence}
\end{figure*}

\subsection{Multichannel Spike Model} We consider continuous signals defined on $\bbR^2$ in this
section, as an `infinite resolution' idealization of discrete images, free of interpolation
artifacts. We refer to \Cref{app:mc-spike-affine} for full technical details. Consider a target
signal  
\begin{equation*}%
    \mb X_o = \sum_{i = 1}^c \mb \delta_{\mb v_i} \otimes  \mb e_i, 
\end{equation*}
where $\vdelta_{\mb \vv_i}$ is a Dirac distribution centered at the point $\vv_i$,
and an observation  
\begin{equation*}%
    \mb X = \sum_{i=1}^c  \mb \delta_{\mb u_i} \otimes \mb e_i,
\end{equation*}
satisfying 
\begin{equation*}%
    \mb v_i = \mb A_\star \mb u_i + \mb b_\star.
\end{equation*}
In words, the observed signal is an affine transformation of the spike signal $\vX_o$, as in
\Cref{fig:convergence}(a, b). This model is directly motivated by the occurrence maps
\Cref{eq:multichannel_spike_map} arising in our hierarchical detection networks.
Following \Cref{eq:complementary_smoothing}, consider the objective function
\begin{equation*}
\varphi_{L^2,\sigma}(\mb A, \mb b) \equiv \frac{1}{2c} \sum_{i=1}^c \left\| \mb g_{\sigma^2 \mb
I - \sigma_0^2 (\mb A^* \mb A)^{-1}} \conv \left( \mr{det}^{1/2}( \mb A^* \mb A )
\left( \mb g_{\sigma_0^2 \mb I} \conv \mb X_{i} \right) \circ \mb \tau_{\mb A, \mb b} \right) -
\mb g_{\mb 0,\sigma^2 \mb I} \conv {(\mb X_o)}_{i} \right\|_{L^2}^2.
\end{equation*}
We study the following ``inverse parameterization'' of this function: 
\begin{equation}
 \varphi^{\mr{inv}}_{L^2,\sigma}(\mb A, \mb b) \equiv \varphi_{L^2,\sigma}( \mb A^{-1}, - \mb A^{-1} \mb b ).
 \label{eq:mcs_inverse_param}
\end{equation}
We analyze the performance of gradient descent for solving the optimization problem 
\begin{equation*}
        \min_{\mb A, \mb b} \varphi^{\mr{inv}}_{L^2,\sigma}(\mb A, \mb b).
\end{equation*}
Under mild conditions, local minimizers of this problem are global. Moreover, if $\sigma$ is set appropriately, the method exhibits linear convergence to the truth:

\begin{theorem}[Multichannel Spike Model, Affine Transforms, $L^2$] \label{thm:multichannel-spike} 
  Consider an instance of the multichannel spike model, with
  $\mb U = [\mb u_1, \dots, \mb u_c ] \in \bb R^{2 \times c}$. Assume that the spikes $\mb U$ are
  centered and nondegenerate, so that $\mb U \One = \mb 0$ and $\rank(\vU) = 2$. Then
  gradient descent
  \begin{align*}
    \mb A_{k+1} &= \mb A_k - t_{\mb A} \nabla_{\mb A} \varphi^{\mr{inv}}_{L^2,\sigma}(\mb
    A_k, \mb b_k ), \\ 
    \mb b_{k+1} &= \mb b_k - t_{\mb b} \nabla_{\mb b} \varphi^{\mr{inv}}_{L^2,\sigma}(\mb
    A_k, \mb b_k ) 
  \end{align*}
  with smoothing 
  \begin{equation}
    \sigma^2 
    \ge 2\frac{\max_i\, \norm{\vu_i}_2^2}{s_{\min}(\vU)^2} %
    \left(  
      {s_{\max}(\vU)^2 \| \mb A_\star - \mb I \|_F^2}%
      + c{  \| \mb b_\star \|_2^2 }%
    \right)
    \label{eq:sigma-bound}
  \end{equation}
  and step sizes 
  \begin{equation}
    \begin{split}
      t_{\mb A} &= \frac{8\pi c\sigma^4}{ s_{\max}(\vU)^2 }, \\
      t_{\mb b} &= 8\pi \sigma^4,
    \end{split}
    \label{eq:spike_reg_stepsizes}
  \end{equation}
  from initialization $\mb A_0 = \mb I, \mb b_0 = \mb 0$ satisfies 
  \begin{align}
    \frac{8\pi \sigma^4}{t_{\mb A}} 
    \| \mb A_k - \mb A_\star \|_{F}^2 + \| \mb b_k - \mb b_\star \|_2^2 &\le
    \left( 1 - \frac{1}{2 \kappa} \right)^{2k} \Bigl( \frac{8\pi \sigma^4}{t_{\mb A}} \| \mb I -
    \mb A_\star \|_{F}^2 + \| \mb b_\star \|_{2}^2 \Bigr), \label{eq:opt-main-boundmain}
  \end{align}
  where 
  \begin{equation*}
    \kappa = \frac{ s_{\max}( \mb U)^2 }{ s_{\min}( \mb U)^2 },
  \end{equation*}
  with, $s_{\min}(\vU)$ and $s_{\max}(\vU)$ denoting the minimum and maximum
  singular values of the matrix $\vU$. 
\end{theorem}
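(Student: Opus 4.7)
The plan is to reduce the objective to a closed form depending only on the residuals, to recast gradient descent as a preconditioned matrix recursion with a transparent spectral contraction, and to close the argument with an induction that keeps the Gaussian weights bounded away from zero.

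First, I would rewrite $\varphi^{\mr{inv}}_{L^2,\sigma}$ in closed form. Under the inverse parametrization, the map $\vtau_{\vA^{-1}, -\vA^{-1}\vb}$ sends $\vx \mapsto \vA^{-1}(\vx - \vb)$; applying this change of variables to $\vg_{\sigma_0^2 \vI} \conv \vdelta_{\vu_i}$ and multiplying by the reparametrized prefactor $\det^{-1/2}(\vA^*\vA)$ produces a Gaussian with covariance $\sigma_0^2 \vA\vA^*$ centered at $\vA\vu_i + \vb$; convolution with the compensating filter $\vg_{\sigma^2 \vI - \sigma_0^2 \vA\vA^*}$ upgrades the covariance to $\sigma^2 \vI$. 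Substituting the standard Gaussian $L^2$ identity $\|\vg_{\mu_1, \sigma^2 \vI} - \vg_{\mu_2, \sigma^2 \vI}\|_{L^2}^2 = \tfrac{1}{2\pi\sigma^2}(1 - \exp(-\|\mu_1 - \mu_2\|_2^2/(4\sigma^2)))$ yields
\begin{equation*}
    \varphi^{\mr{inv}}_{L^2,\sigma}(\vA, \vb) = \frac{1}{4\pi c\sigma^2}\sum_{i=1}^c \Bigl(1 - \exp\bigl(-\|\vr_i\|_2^2/(4\sigma^2)\bigr)\Bigr), \qquad \vr_i \defn (\vA - \vA_\star)\vu_i + (\vb - \vb_\star).
\end{equation*}
This objective is manifestly nonnegative with a unique zero at the truth (by $\rank(\vU) = 2$), and is locally quadratic.

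Second, I would vectorize the gradient step. In the error variables $\vE_k = \vA_k - \vA_\star$, $\vf_k = \vb_k - \vb_\star$, introduce the augmented objects $\wt{\vE}_k \defn [\vE_k, \vf_k] \in \bbR^{2 \times 3}$ and $\wt{\vU} \defn \begin{bmatrix}\vU \\ \One^\top\end{bmatrix} \in \bbR^{3 \times c}$, so that $\vr_i = \wt{\vE}_k \wt{\vu}_i$. A direct differentiation gives $\nabla_{\wt{\vE}}\varphi^{\mr{inv}}_{L^2,\sigma} = \tfrac{1}{8\pi c\sigma^4}\,\wt{\vE}\,\wt{\vU}\vW\wt{\vU}^\top$, with $\vW_k = \diag(\exp(-\|\vr_i^{(k)}\|_2^2/(4\sigma^2)))$. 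The prescribed step sizes $(t_{\vA}, t_{\vb})$ are exactly a column preconditioner $\vD_t = \diag(1/s_{\max}(\vU)^2, 1/s_{\max}(\vU)^2, 1/c)$, so the update reads $\wt{\vE}_{k+1} = \wt{\vE}_k(\vI - \wt{\vU}\vW_k\wt{\vU}^\top\vD_t)$. Setting $\vY_k = \wt{\vE}_k\vD_t^{-1/2}$ and $\vM_k = \vI - \vD_t^{1/2}\wt{\vU}\vW_k\wt{\vU}^\top\vD_t^{1/2}$ (which is symmetric) reduces the dynamics to $\vY_{k+1} = \vY_k\vM_k$, with $\|\vY_k\|_F^2 = s_{\max}(\vU)^2\|\vE_k\|_F^2 + c\|\vf_k\|_2^2 = c\Phi_k^2$, i.e., the theorem's Lyapunov quantity up to a factor $c$.

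Third, I would combine a spectral contraction bound with an inductive weight bound. The centering hypothesis $\vU\One = \Zero$ makes $\wt{\vU}\wt{\vU}^\top = \diag(\vU\vU^\top, c)$ block-diagonal, so $\vD_t^{1/2}\wt{\vU}\wt{\vU}^\top\vD_t^{1/2} = \diag(\vU\vU^\top/s_{\max}(\vU)^2, 1)$ has spectrum exactly in $[1/\kappa, 1]$. If $\vW_k \succeq \tfrac{1}{2}\vI_c$, L\"owner monotonicity gives the weighted form spectrum in $[1/(2\kappa), 1]$, so the spectral norm of $\vM_k$ is at most $1 - 1/(2\kappa)$. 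To enforce $\vW_k \succeq \tfrac{1}{2}\vI_c$, centering also yields $\max_i\|\vr_i\|_2^2 \le \sum_j\|\vr_j\|_2^2 = \|\vE\vU\|_F^2 + c\|\vf\|_2^2 \le c\Phi_k^2$, so the hypothesis $\sigma^2 \ge 2c\tfrac{\max_i\|\vu_i\|_2^2}{s_{\min}(\vU)^2}\Phi_0^2$ is calibrated to give $c\Phi_0^2/(4\sigma^2) \le \ln 2$, hence $\vW_0 \succeq \tfrac{1}{2}\vI_c$. Inductively, if $\Phi_k^2 \le \Phi_0^2$ and $\vW_k \succeq \tfrac{1}{2}\vI_c$, then the spectral bound gives $\Phi_{k+1}^2 \le (1 - 1/(2\kappa))^2\Phi_k^2 \le \Phi_0^2$, preserving the weight hypothesis; iterating yields the theorem.

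The main obstacle is this coupled bootstrap between the contraction rate (which needs the weight bound) and the weight bound (which needs the contraction). Making the induction airtight requires verifying that the constant "$2$" in the $\sigma^2$ hypothesis exactly closes the loop given the worst-case spike geometry: the uniform residual bound $\max_i\|\vr_i\|_2^2 \le c\Phi^2$ is sharp only when the spikes saturate the centering and rank-$2$ constraints in a specific configuration, and carefully tracking the interplay of these constants with the target rate $1 - 1/(2\kappa)$ is the most delicate step. A secondary technical point is justifying the $\det^{1/2}$-compensated affine pullback of Dirac distributions in the closed-form reduction rigorously, which requires interpreting the Gaussian convolution algebra in the sense of tempered distributions.
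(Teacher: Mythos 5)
Your overall architecture matches the paper's: both reduce $\varphi^{\mr{inv}}_{L^2,\sigma}$ to the closed form $\frac{1}{4\pi c\sigma^2}\sum_i\bigl(1-\exp(-\norm{\vr_i}_2^2/4\sigma^2)\bigr)$, recast gradient descent as a linear recursion on the rescaled error $[\,t_{\vA}^{-1/2}(\vA_k-\vA_\star),\ \vb_k-\vb_\star\,]$, exploit $\vU\One=\Zero$ to block-diagonalize the unweighted Gram matrix, and close with an induction that keeps the Gaussian weights close to $1$. Your distributional bookkeeping for the $\det^{1/2}$-compensated pullback is correct and agrees with the paper's computation. Where you genuinely differ is the contraction step: you sandwich the weight matrix $\tfrac12\vI\preceq\vW_k\preceq\vI$ and use L\"owner monotonicity of $\vX\mapsto\wt{\vU}\vX\wt{\vU}\adj$ to place the spectrum of the symmetric iteration matrix directly in $[0,1-\tfrac{1}{2\kappa}]$, whereas the paper splits off the ideal $\vW=\vI$ term and controls the remainder by $2\norm{\nu\dot{\vpsi}-\One}_{\ell^\infty}\le\tfrac{1}{2\kappa}$ via the triangle inequality. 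Your route only needs the weights bounded below by an absolute constant, rather than within $O(1/\kappa)$ of $1$, which is a real structural improvement for ill-conditioned spike configurations.

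However, your verification that $\vW_k\succeq\tfrac12\vI$ has a genuine gap. You bound $\max_i\norm{\vr_i}_2^2\le\sum_j\norm{\vr_j}_2^2\le c\Phi_k^2$ and assert that the hypothesis $\sigma^2\ge\tfrac{2\max_i\norm{\vu_i}_2^2}{s_{\min}(\vU)^2}\,c\Phi_0^2$ is calibrated to give $c\Phi_0^2/(4\sigma^2)\le\ln 2$. That implication is false in general: the hypothesis only yields $c\Phi_0^2/(4\sigma^2)\le s_{\min}(\vU)^2/(8\max_i\norm{\vu_i}_2^2)$, and the ratio $s_{\min}(\vU)^2/\max_i\norm{\vu_i}_2^2$ can be as large as $c/2$ (take $c$ spikes equally spaced on a circle of radius $r$, where $s_{\min}(\vU)^2=cr^2/2$ while $\max_i\norm{\vu_i}_2^2=r^2$), so for $c\ge 12$ your requirement is not implied by the stated smoothing condition. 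The defect is that passing from $\max_i$ to $\sum_j$ discards a factor of up to $c$ that the hypothesis has no headroom to absorb. The fix is the per-spike bound the paper uses: $\norm{\vr_i}_2^2\le 2\norm{\vA_k-\vA_\star}_F^2\norm{\vu_i}_2^2+2\norm{\vb_k-\vb_\star}_2^2\le\tfrac{4c\max_j\norm{\vu_j}_2^2}{s_{\max}(\vU)^2}\Phi_k^2$ (using $s_{\max}(\vU)^2\le c\max_j\norm{\vu_j}_2^2$), which under the stated hypothesis and the inductive bound $\Phi_k\le\Phi_0$ gives $\norm{\vr_i}_2^2/(4\sigma^2)\le s_{\min}(\vU)^2/(2s_{\max}(\vU)^2)\le\tfrac12<\ln 2$, closing your induction with room to spare. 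With that one substitution your argument is complete.
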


\Cref{thm:multichannel-spike} establishes a global linear rate of convergence for the continuum
occurrence map registration formulation \Cref{eq:mcs_inverse_param} in the relevant product
norm, where the rate depends on the condition number of the matrix of observed spike locations
$\vU$. This dependence arises from the intuitive fact that \textit{recovery} of the transformation
parameters $(\vA_{\star}, \vb_{\star})$ is a more challenging problem than registering the
observation to the motif---in practice, we do not observe significant degradation of the ability to
rapidly register the observed scene as the condition number increases.  The proof of
\Cref{thm:multichannel-spike} reveals that the use of inverse parameterization in
\Cref{eq:mcs_inverse_param} dramatically improves the landscape of optimization: the problem
becomes strongly convex around the true parameters when the smoothing level is set appropriately.
In particular, \Cref{eq:sigma-bound} suggests a level of smoothing commensurate with the maximum
distance the spikes need to travel for a successful registration, and
\Cref{eq:spike_reg_stepsizes} suggests larger step sizes for larger smoothing levels, with
appropriate scaling of the step size on the $\vA$ parameters to account for the larger motions
experienced by objects further from the origin. In the proof, the `centered locations'
assumption $\vU \One = \Zero$ allows us to obtain a global linear rate of convergence in both the
$\vA$ and $\vb$ parameters. This is not a restrictive assumption, as in practice it is always
possible to center the spike scene (e.g., by computing its center of mass and subtracting), and we
also find it to accelerate convergence empirically when it is applied.

\subsection{Experimental Verification} \label{sec:mcs_experiments}
To verify the practical implications of \Cref{thm:multichannel-spike}, which is formulated in the
continuum, we conduct numerical experiments on registering affine-transformed multichannel spike
images using the discrete formulation \Cref{eq:complementary_smoothing}. We implement a proximal
gradient descent solver for \Cref{eq:complementary_smoothing}, and use it to register
randomly-transformed occurrence maps, as visualized in \Cref{fig:convergence}(a-b). We set the
step sizes and level of smoothing in accordance with \Cref{eq:sigma-bound,eq:spike_reg_stepsizes},
with a complementary smoothing value of $\sigma_0 = 3$.  \Cref{fig:convergence} shows
representative results taken from one such run: the objective value rapidly converges to near
working precision, and the normalized cross-correlation between the transformed scene and the
motif rapidly reaches a value of $0.972$.  This rapid convergence implies the formulation
\Cref{eq:complementary_smoothing} is a suitable base for an unrolled architecture with mild depth,
and is a direct consequence of the robust step size prescription offered by
\Cref{thm:multichannel-spike}.  \Cref{fig:convergence}(f) plots the left-hand and right-hand sides
of the parameter error bound \Cref{eq:opt-main-boundmain} to evaluate its applicability to the
discretized formulation: we observe an initial faster-than-predicted linear rate, followed by
saturation at a suboptimal value. This gap is due to the difference between the continuum theory
of \Cref{thm:multichannel-spike} and practice: in the discretized setting, interpolation errors
and finite-resolution artifacts prevent subpixel-perfect registration of the parameters, and hence
exact recovery of the transformation $(\vA_{\star}, \vb_{\star})$.  In practice, successful
registration of the spike scene, as demonstrated by \Cref{fig:convergence}(e), is sufficient for
applications, as in the networks we develop for hierarchical detection in \Cref{sec:motif}.

\section{Basin of Attraction for Textured Motif Registration with
\Cref{eq:formulation_withbg}} \label{sec:basin_experiments}
The theory and experiments we have presented in \Cref{sec:theory} justify the use of local
optimization for alignment of spiky motifs. In this section, we provide additional corroboration
beyond the experiment of \Cref{fig:centerpiece} of the efficacy of our textured motif registration
formulation \Cref{eq:formulation_withbg}, under euclidean and similarity motion models. 
To this end, in
\Cref{fig:basin} we empirically evaluate the basin of attraction of a suitably-configured solver
for registration of the crab body motif from \Cref{fig:motif} with this formulation. 
Two-dimensional search grids are generated for each of the two setups as shown in the figure. For
each given pair of transformation parameters, a similar multi-scale scheme over $\sigma$ as in the
above complexity experiment is used, starting at $\sigma=10$ and step size 0.05, and halved every
50 iterations. The process terminates after a total of 250 iterations. The final ZNCC calculated
over the motif support is reported, and the figure plots the average over 10 independent runs,
where the background image is randomly generated for each pair of parameters in each run. The ZNCC
ranges from $0$ to $1$, with a value of $1$ implying equality of the channel-mean-subtracted motif
and transformed image content over the corresponding support (up to scale).

\begin{figure}[!htb]
    \centering
    \includegraphics[width=.8\linewidth]{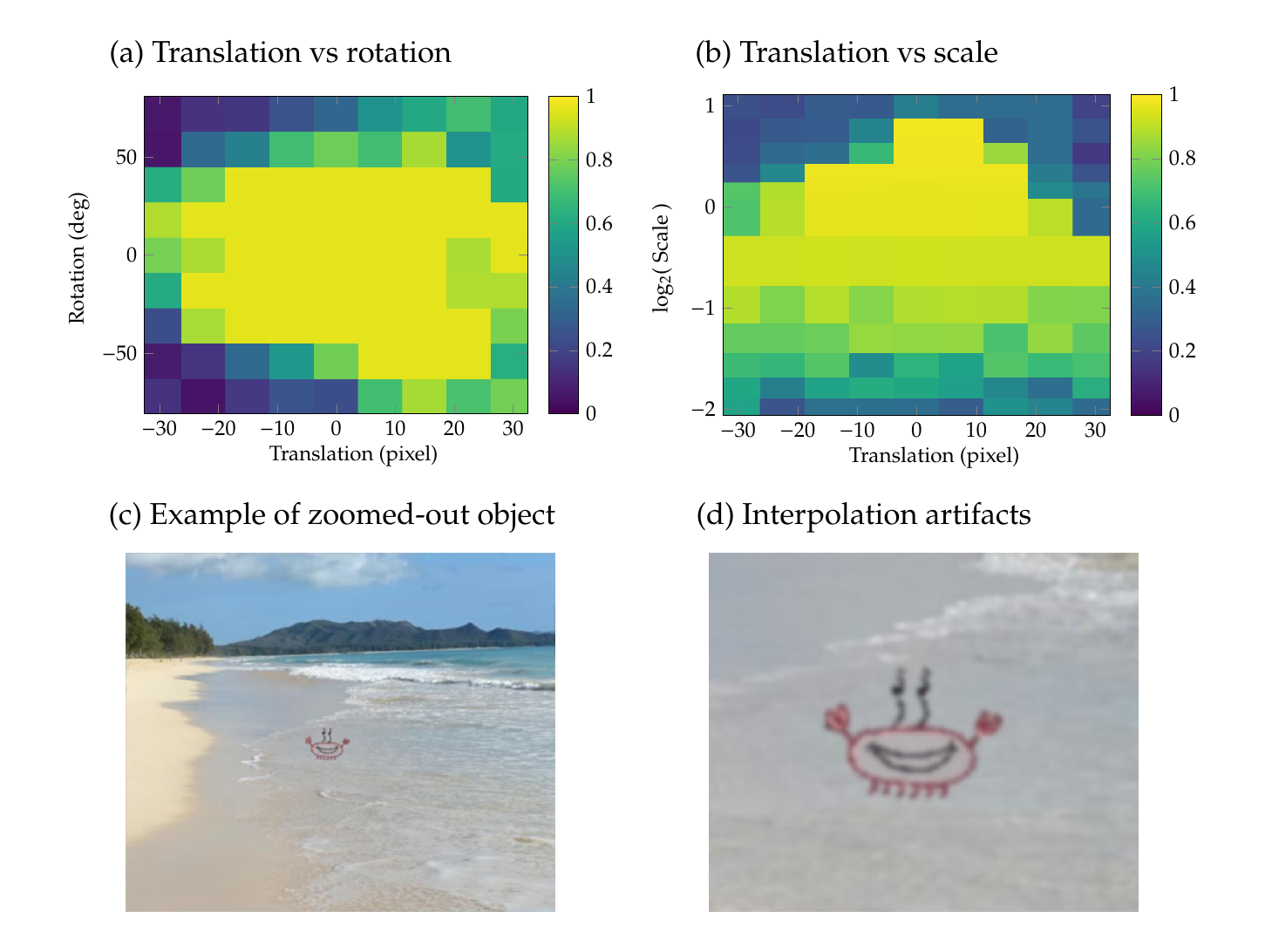}
    \caption{Plotting a basin of attraction for the textured motif registration formulation
      \Cref{eq:formulation_withbg}. \textbf{(a):} Heatmap of the ZNCC at convergence (see
      \Cref{sec:extra_experiment_details}), for translation versus rotation. Optimization
      conducted with $\SE(2)$ motion model.  \textbf{(b):} Heatmap of the ZNCC at convergence, for
      translation versus scale. Optimization conducted in `similarity mode', a $\SE(2)$ motion
      model with an extra global scale parameter. In both experiments, each
      reported data point is averaged over 10 independent runs. \textbf{(c-d):} Notably, when the
      registration target $\vy$ is zoomed out relative to the motif $\vx_o$, resolution is lost in
      the detection target, so recovering it will cause interpolation artifacts and blur the
      image. This prevents the ZNCC value from converging to 1 despite correct alignment with the
    motif, and accounts for the results shown in (b) at small scales. }
    \label{fig:basin}
\end{figure}

Panels (a) and (b) of \Cref{fig:basin} show that the optimization method tends to succeed
unconditionally up to moderate amounts of transformation.  
For larger sets of transformations, it is important to first appropriately center the image, which
will significantly improve the optimization performance.  In practice, one may use a combination
of optimization and a small number of covering, so that the entire transformation space is covered
by the union of the basins of attractions.
We note that irregularity near the
edges, especially in panel (a), can be attributed in part due to the randomness in the background
embedding, and in this sense the size of the basin in these results conveys a level of performance
across a range of simulated operating conditions. In general, these basins are also
motif-dependent: we would expect these results to change if we were testing with the eye motif
from \Cref{fig:detection}(a), for example.
A notable phenomenon in \Cref{fig:basin}(b), where translation is varied against scale, is the
lack of a clear-cut boundary of the basin at small scales. This is due to the effect illustrated
in \Cref{fig:basin}(c-d), where interpolation artifacts corrupt the motif when it is `zoomed out' by
optimization over deformations, and hence registration can never achieve a ZNCC close to 1. 
For applications where perfect reconstruction is not required, such as the hierarchical detection
task studied in \Cref{sec:motif}, these interpolation artifacts will not hinder the ability to
localize the motif in the scene at intermediate scales, and if the basin were generated with a
success metric other than ZNCC, a better-defined boundary to the basin would emerge.

\section{Discussion}
In this paper, we have taken initial steps towards realizing the potential of optimization over
transformations of domain as an approach to achieve resource-efficient invariance with visual
data. Below, we discuss several important future directions for the basic framework we have
developed.

\paragraph{Statistical variability and complex tasks.} To build invariant networks for complex
visual tasks and real-world data beyond matching against fixed templates $\vx_o$, it will be
necessary to incorporate more refined appearance models for objects, such as a sparse dictionary
model or a deep generative model \cite{Song2021-gj,Brock2018-ya,Dai2021-rg}, and train the
resulting hybrid networks in an end-to-end fashion. The invariant architectures we have designed
in this work naturally plug into such a framework, and will allow for investigations similar to
what we have developed in \Cref{sec:motif} into challenging tasks with additional structure (e.g.,
temporal or 3D data).  Coping with the more complex motion models in these applications will
demand regularizers $\rho$ for our general optimization formulation \Cref{eq:general_formulation}
that go beyond parametric constraints.

\paragraph{Theory for registration of textured motifs in visual clutter.} Our experiments in
\Cref{sec:theory} have demonstrated the value that theoretical studies of optimization
formulations have with respect to the design of the corresponding unrolled networks. Extending our
theory for spiky motif registration to more general textured motifs will enable similar insights
into the roles played by the various problem parameters in a formulation like
\Cref{eq:formulation_withbg} with respect to texture and shape properties of visual data and the
clutter present, and allow for similarly resource-efficient architectures to be derived in
applications like the hierarchical template detection task we have developed in
\Cref{sec:hier_networks}.

\paragraph{Hierarchical detection networks in real-time.} The above directions will enable the
networks we have demoed for hierarchical detection in \Cref{sec:motif} to scale to more general
data models. At the same time, there are promising directions to improve the efficiency of the
networks we design for a task like this one at the modeling level.  For example, the networks we
design in \Cref{sec:hier_networks} essentially operate in a `sliding window' fashion, without
sharing information across the strides $\vlambda$, and they perform registration and detection
separately. An architecture developed around an integrated approach to registration and detection,
possibly building off advances in convolutional sparse modeling
\cite{Kuo2019-as,Lau2019-kq,Qu2019-jf}, may lead to further efficiency gains and push the
resulting network closer to real-time operation capability.

\section*{Acknowledgments}
This work was supported by the National Science Foundation through grants NSF 1733857, NSF
1838061, NSF 1740833, and NSF 2112085,
and by a fellowship award (SB) through the National Defense Science and Engineering Graduate
(NDSEG) Fellowship Program, sponsored by the Air Force Research Laboratory (AFRL), the Office of
Naval Research (ONR) and the Army Research Office (ARO).
The authors thank Mariam Avagyan, Ben Haeffele, and Yi Ma for helpful discussions and feedback.

\printbibliography

\newpage
\appendix
\onecolumn

\section{Implementation and Experimental Details}
\subsection{Implementation Details for Parametric Transformations of the Image Plane}
\label{sec:parametric} 

Our implementation of parametric image deformations revolves around the specific definition of
interpolation we have made:
\begin{equation*}
  \vy \circ \vtau = 
  \sum_{(k, l) \in \bbZ^2} y_{kl} \phi(\vtau_0 - k\One) \circleddot \phi(\vtau_1 - l\One),
\end{equation*}
and the identification of the image $\vy \in \bbR^{m \times n}$ with a function on $\bbZ^2$ with
support in $\set{0, \dots, m-1} \times \set{0, \dots, n-1}$.\footnote{These conventions are not
  universal, although they seem most natural from a mathematical standpoint---for example, PyTorch
thinks of its images as lying on a grid in the square $[-1, +1]\times [-1, +1]$ instead, with
spacing and offsets depending on the image resolution and other implementation-specific options.
In our released code, we handle conversion from our notation to this notation.} Although we use
the notation $\circ$ for interpolation in analogy with the usual notation for composition of
functions, this operation is significantly less well-structured: although we \textit{can} define
interpolation of motion fields $\vtau_0 \circ \vtau_1$, it is impossible in general to even have
associativity of $\circ$ (let alone inverses), so that in general $(\vx \circ \vtau_0) \circ
\vtau_1 \neq \vx \circ (\vtau_0
\circ \vtau_1)$. This failure is intimately linked to the existence of parasitic interpolation
artifacts when computing and optimizing with interpolated images, which we go to great lengths to
avoid in our experiments. On
the other hand, there does exist a well-defined identity vector field:
from our definitions, we can read off the canonical definition of the identity transformation,
from which definitions for other parametric transformations we consider here follow. Defining
(with a slight abuse of notation)
\begin{equation*}
  \vm = \begin{bmatrix}
    0 \\
    1 \\
    \vdots \\
    m-1
  \end{bmatrix};
  \quad
  \vn = \begin{bmatrix}
    0 \\
    1 \\
    \vdots \\
    n-1
  \end{bmatrix},
\end{equation*}
we have from the definition of the cubic convolution interpolation kernel $\phi$ that
\begin{equation*}
  \vy \circ \left( \vm \One\adj \kron \ve_0 
  + \One \vn\adj \kron \ve_1  \right)
  = \vy.
\end{equation*}
One can then check that the following linear embedding of the affine transformations, which we
will write as $\mathrm{Aff}(2) = \GL(2) \times \bbR^2$, leads to the natural vector field
analogue of affine transformations on the continuum $\bbR^2$ (c.f.\ \Cref{app:mc-spike-affine}):
\begin{equation}
  \mathrm{Aff}(2) \cong
    \Span \set*{
      \vm \One\adj \kron \ve_0,
      \One \vn\adj \kron \ve_0,
      \vm \One\adj \kron \ve_1,
      \One \vn\adj \kron \ve_1,
      \One_{m,n} \kron \ve_0,
      \One_{m,n} \kron \ve_1
    }.
    \label{eq:affine_subspace}
\end{equation}
Of course, these vector fields can be any size---they need not match the size of the image.
As we mention in \Cref{sec:unrolling}, we always initialize our networks with the identity
transform; in the basis above, this corresponds to the vector $(1, 0, 0, 1, 0, 0)$ (i.e., this is
like a row-wise flattening of the affine transform's matrix $\vA \in \GL(2)$, concatenated with
$\vb$). 

Next we turn to computation of the proximal operator, which we need for unrolling (see
\Cref{sec:unrolling}). Given \Cref{eq:registration_unrolled} and the fact that
\Cref{eq:affine_subspace} is a subspace, we can compute the proximal operator for
$\mathrm{Aff}(2)$ given an orthonormal basis for $\mathrm{Aff}(2)$. It is then unfortunate that
the natural basis vectors that we have used in the expression \Cref{eq:affine_subspace} are not
orthogonal: we have $\ip{\vm \One\adj}{\One \vn\adj} = \ip{\vm}{\One} \ip{\vn}{\One} \gg 0$, for
example. To get around this, in practice we apply a technique we refer to as \textit{centering} of
transformations. Indeed, notice that for any $\vc \in \bbR^2$, we have
\begin{equation}
  \begin{split}
    \mathrm{Aff}(2) \cong
    &\Span \set*{
      (\vm - c_0 \One) \One\adj \kron \ve_0,
      \One (\vn - c_1 \One)\adj \kron \ve_0,
      (\vm - c_0 \One) \One\adj \kron \ve_1,
      \One (\vn - c_1 \One)\adj \kron \ve_1,
      \One_{m,n} \kron \ve_0,
      \One_{m,n} \kron \ve_1
    } \\
    &\quad+ \One_{m, n} \kron \vc.
  \end{split}
  \label{eq:affine_subspace_centered}
\end{equation}
In the continuum, applying an affine transform in this way corresponds to the mapping $\vx \mapsto
\vA(\vx - \vc) + \vb + \vc$, hence the name: the image plane is shifted to have its origin at
$\vc$ for the purposes of applying the transform. When we implement affine transforms as
suggested by \Cref{eq:affine_subspace_centered}, we choose $\vc$ to make the basis vectors
orthogonal this necessitates that $\vc = ((m-1)/2, (n-1)/2)$. Then we are able to write down a
concrete expression for the projection operator in these coordinates:\footnote{In practice, our
  choice of step size is made to scale each element in this basis to be orthonormal (in
  particular, applying different steps to the matrix and translation parameters of the
  transformation)---strictly speaking the projection in \Cref{eq:projection_onto_aff} is not the
  orthogonal projection because this extra scaling has not been applied. We do not specify this
  scaling here because its optimal value often depends on the image content: for example, see the
step size prescriptions in \Cref{thm:multichannel-spike}.}
\begin{equation}
  \mathrm{proj}_{\mathrm{Aff}(2)}(\vtau)
  =
  \left(
    (\vm - \tfrac{m-1}{2} \One)\adj \vtau_0 \One,
    \One\adj\vtau_0(\vn - \tfrac{n-1}{2} \One),
    (\vm - \tfrac{m-1}{2} \One)\adj \vtau_1 \One,
    \One\adj\vtau_1(\vn - \tfrac{n-1}{2} \One),
    \One\adj\vtau_0\One,
    \One\adj\vtau_1\One
  \right).
  \label{eq:projection_onto_aff}
\end{equation}
The low-rank structure of the basis vectors implies that this transformation can be computed quite
rapidly.
Although it may seem we have undertaken this discussion for the sake of mathematical rigor, in our
experiments we observe significant computational benefits to centering by the prescription above.
For example, when computing with \Cref{eq:complementary_smoothing}, using a non-orthogonal basis
for the affine transforms (or a center that is not at the center of the region being transformed)
often leads to skewing artifacts in the final transformation recovered.  We also notice slower
convergence.

Finally, for our experiments in \Cref{sec:motif} with the rigid motion model $\SE(2)$, some
additional discussion is required. This is because the orthogonal transformations $\SO(2)$ are not
a linear subspace, like the affine transforms \Cref{eq:affine_subspace}, but a smooth manifold
(diffeomorphic to a circle). For these transformations, we modify the formula
\Cref{eq:general_formulation} by differentiating in a parameterization of $\SE(2)$: concretely, we
use
\begin{equation*}
  \SO(2) \cong \set*{
    \begin{bmatrix}
      \cos \theta & -\sin \theta \\
      \sin \theta & \cos \theta
  \end{bmatrix} \given \theta \in [0, 2\pi]}.
\end{equation*}
Writing $F : \bbR \to \bbR^{m \times n \times 2}$ for this parameterization composed with our
usual vector field representation \Cref{eq:affine_subspace_centered} for subgroups of the affine
transforms, we modify the objective \Cref{eq:general_formulation} to be $\min_{\theta} \varphi(\vy
\circ F(\theta))$. A simple calculation then shows that gradients in this parameterization are
obtainable from gradients with respect to the affine parameterization as
\begin{equation*}
  \nabla_{\theta}[\varphi(\vy \circ F)](\theta)
  =
  \ip*{
    \nabla_{\vA}[\varphi(\vy \circ \vtau_{\spcdot, \vb})]\left(
      \begin{bmatrix}
        \cos \theta & -\sin \theta \\
        \sin \theta & \cos \theta
      \end{bmatrix}
    \right)
    }{
    \begin{bmatrix}
      -\sin \theta & -\cos \theta \\
      \cos \theta & -\sin \theta
    \end{bmatrix}
  }.
\end{equation*}
This is a minor extra nonlinearity that replaces the proximal operation when we unroll networks
as in \Cref{eq:prox_general} with this motion model.  Gradients and projections with respect to
the translation parameters are no different from the affine case.

\subsection{Gradient Calculations for Unrolled Network Architectures} \label{sec:grads}
We collect in this section several computations relevant to gradients of the function $\varphi$
(following the structure of \Cref{eq:general_formulation}) in the optimization formulations
\Cref{eq:formulation_reg,eq:formulation_costsmooth,eq:formulation_withbg,eq:complementary_smoothing}.

\paragraph{$\vtau$ gradients.} All of the costs we consider use the $\ell^2$ error $\norm{}_F$, so
their gradient calculations with respect to $\vtau$ are very similar. We will demonstrate the
gradient calculation for \Cref{eq:formulation_reg} to show how \Cref{eq:registration_unrolled} is
derived; the calculations for other costs follow the same type of argument. To be concise, we will
write $\nabla_{\vtau}\varphi$ for the gradient with respect to $\vtau$ of the relevant costs
$\varphi(\vy \circ \vtau)$.

\begin{proposition}
  \label{prop:reg_gradient}
  Let $\varphi$ denote the $\norm{}_F$ cost in \Cref{eq:formulation_reg}. One has
  \begin{equation*}
    \nabla_{\mb \tau} \varphi(\vtau) 
    =
    \sum_{k=0}^{c-1} 
    \left(\vg_{\sigma^2} \conv \sP_{\Omega}\left[
        \vg_{\sigma^2} \conv \left( \vy \circ \vtau - \vx_o \right)_k
    \right] \kron \One_2 \right) 
    \circleddot \left(\diff \vy_k \circ \vtau\right).
  \end{equation*}
  \begin{proof}
    The cost separates over channels, so by linearity of the gradient it suffices to assume $c=1$.
    We proceed by calculating the differential of $\varphi(\vy \circ \vtau)$ with respect to
    $\vtau$. We have for $\vDelta$ of the same shape as $\vtau$ and $t \in \bbR$
    \begin{equation*}
      \dac \varphi(\vy \circ (\vtau + t\vDelta))
      =
      \ip*{
        \sP_{\Omega}\left[
          \vg_{\sigma^2} \conv (\vy \circ \vtau - \vx_o)
        \right] \kron \One_2
        }{
        \sP_{\Omega}\left[
          \vg_{\sigma^2} \conv \left((\diff \vy \circ \vtau)
          \circleddot \vDelta\right)
        \right]
      },
    \end{equation*}
    where $\diff \vy \in \bbR^{m \times n \times 2}$ is the Jacobian matrix of $\vy$, defined as
    (here $\dot{\phi}$ denotes the derivative of the cubic convolution interpolation kernel
    $\phi$)
    \begin{equation*}
      \diff \vy_0 = 
      \sum_{(k, l) \in \bbZ^2} y_{kl} \dot{\phi}(\vm \One\adj - k\One) \circleddot \phi(\One
      \vn\adj - l\One),
      \quad
      \diff \vy_1 = 
      \sum_{(k, l) \in \bbZ^2} y_{kl} {\phi}(\vm \One\adj - k\One) \circleddot \dot{\phi}(\One
      \vn\adj - l\One),
    \end{equation*}
    and where for concision we are writing $\vg_{\sigma^2} \conv \diff \vy$ to denote the filtering
    of each of the two individual channels of $\diff \vy$ by $\vg_{\sigma^2}$.
    Using three adjoint relations ($\sP_{\Omega}$ is an orthogonal projection, hence self-adjoint;
    the adjoint of convolution by $\vg_{\sigma^2}$ is cross-correlation with $\vg_{\sigma^2}$;
    elementwise multiplication is self-adjoint) and a property of the tensor product, the claim
    follows.
  \end{proof}
\end{proposition}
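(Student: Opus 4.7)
The plan is to compute the gradient from first principles by taking a directional derivative of $\varphi$ at $\vtau$ in an arbitrary direction $\vDelta$, expressing it as an inner product with $\vDelta$, and reading off the gradient. First I would reduce to the single-channel case $c=1$: the squared-Frobenius cost in \Cref{eq:formulation_reg} is a sum over the $c$ channels of the residual, and the deformation $\vtau$ acts channel-wise independently, so by linearity of the gradient the multichannel formula is simply the sum of single-channel gradients.

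Next, for $c=1$, I would differentiate $\varphi(\vy \circ (\vtau + t\vDelta))$ at $t=0$ by the chain rule. Writing $F(\vtau) = \sP_{\Omega}[\vg_{\sigma^2} \conv (\vy \circ \vtau - \vx_o)]$ so that $\varphi = \tfrac12 \norm{F(\vtau)}_F^2$, the outer derivative produces $\ip{F(\vtau)}{dF(\vtau)[\vDelta]}$, and the linear operators $\sP_{\Omega}$ and $\vg_{\sigma^2} \conv (\cdot)$ pass inside the differential unchanged. The remaining task is to differentiate the interpolation map $\vy \circ \vtau$. From the defining formula $(\vy \circ \vtau)_{ij} = \sum_{(k,l)\in\bbZ^2} y_{kl}\, \phi(\tau_{ij0}-k)\,\phi(\tau_{ij1}-l)$, differentiating under the sum in the direction $\vDelta$ yields, per pixel $(i,j)$,
\begin{equation*}
\bigl(d(\vy \circ \vtau)[\vDelta]\bigr)_{ij}
= \sum_{a \in \set{0,1}} (\diff \vy \circ \vtau)_{ij a}\, \Delta_{ij a},
\end{equation*}
where $\diff \vy$ is exactly the Jacobian image defined in the excerpt, obtained by replacing one of the two factors $\phi$ with its derivative $\dot\phi$.

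Then I would apply three adjoint relations to pull all of the operators onto the $\vDelta$ side of the inner product: (i) $\sP_{\Omega}$ is an orthogonal projection, hence self-adjoint; (ii) the adjoint of convolution with $\vg_{\sigma^2}$ is cross-correlation with $\vg_{\sigma^2}$, which coincides with convolution because the Gaussian kernel is symmetric; and (iii) elementwise multiplication by a fixed image is self-adjoint. Applying these in sequence lets me rewrite the differential as $\ip{\vG(\vtau)}{\vDelta}$ with $\vG(\vtau)$ matching the claimed formula. The factor $\kron \One_2$ emerges naturally: at each pixel $(i,j)$, the same scalar $(\vg_{\sigma^2} \conv \sP_{\Omega}[\vg_{\sigma^2} \conv (\vy \circ \vtau - \vx_o)])_{ij}$ multiplies both spatial components $(\diff \vy \circ \vtau)_{ij 0}$ and $(\diff \vy \circ \vtau)_{ij 1}$ before being paired against $\Delta_{ij 0}$ and $\Delta_{ij 1}$.

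The main obstacle is purely bookkeeping: one must track the two grids involved (the $m'\times n'$ output grid of $\vtau$ and the $m\times n$ source grid of $\vy$), the two spatial components of the vector field, and the implicit zero-padding and truncation conventions built into the definition of $\circ$, so that the adjoints in steps (i)--(iii) produce exactly the stated index structure and the $\kron \One_2$ pattern. No genuine analytic difficulty arises: differentiation under the sum defining $\vy \circ \vtau$ is legitimate because $\phi$ is compactly supported with absolutely continuous derivative, so only finitely many terms contribute at each $(i,j)$ and each term is differentiable in $\vtau$.
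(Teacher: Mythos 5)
Your proposal is correct and follows essentially the same route as the paper's proof: reduce to $c=1$ by channel separability, differentiate the interpolation map under the sum to obtain the Jacobian $\diff\vy$, and transpose the three linear operators ($\sP_\Omega$, Gaussian filtering, elementwise multiplication) to read off the gradient, with the $\kron \One_2$ factor arising from pairing the scalar residual against both components of $\vDelta$. Your added remarks---that cross-correlation with $\vg_{\sigma^2}$ coincides with convolution by symmetry of the Gaussian, and that differentiation under the sum is justified by the compact support and regularity of $\phi$---are correct details the paper leaves implicit.
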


\paragraph{Convolutional representation of cost-smoothed formulation
\Cref{eq:formulation_costsmooth}.} The cost-smoothed formulation \Cref{eq:formulation_costsmooth} 
can be directly expressed as a certain convolution with $\vg_{\sigma^2}$, leading to very fast
convolution-free inner loops in gradient descent implementation. To see this, write
\begin{align*}
  \norm*{
    \sP_{\Omega}\left[
      \vy \circ (\vtau + \vtau_{\Zero, \vDelta}) - \vx_o
    \right]
  }_F^2
  &=
  \norm*{ \sP_{\Omega}\left[ \vy \circ (\vtau + \vtau_{\Zero, \vDelta}) \right] }_F^2
  +
  \norm*{ \sP_{\Omega}\left[ \vx_o \right] }_F^2
  +
  2 \ip*{ \sP_{\Omega}\left[ \vy \circ (\vtau + \vtau_{\Zero, \vDelta})\right]}
  {\sP_{\Omega}\left[ \vx_o \right]} \\
  &=
  \ip*{\left[ \vy \circ (\vtau + \vtau_{\Zero, \vDelta})\right]^{\circleddot
  2}}{\sP_{\Omega}\left[\One\right]}
  +
  \norm*{ \sP_{\Omega}\left[ \vx_o \right] }_F^2
  +
  2 \ip*{ \vy \circ (\vtau + \vtau_{\Zero, \vDelta})}
  {\sP_{\Omega}\left[ \vx_o \right]},
\end{align*}
using self-adjointness of $\sP_{\Omega}$ and the fact that it can be represented as an elementwise
multiplication, and writing $[\spcdot]^{\circleddot 2}$ for elementwise squaring.
Thus, denoting the $\norm{}_F$ cost in \Cref{eq:formulation_costsmooth} by $\varphi(\vtau)$,
$\varphi$ can be written as
\begin{align*}
  2\varphi(\vtau)
  =
  &\ip*{ \sum_{\vDelta } (\vg_{\sigma^2})_{\vDelta} \left[ \vy
  \circ (\vtau + \vtau_{\Zero, \vDelta})\right]^{\circleddot 2}}{\sP_{\Omega}\left[\One\right]}
  + \ip*{\vg_{\sigma^2}}{\One} \norm*{ \sP_{\Omega}\left[ \vx_o \right] }_F^2\\
  &\quad   +
  2 \ip*{ \sum_{\vDelta} (\vg_{\sigma^2})_{\vDelta} \cdot \vy
  \circ (\vtau + \vtau_{\Zero, \vDelta})} {\sP_{\Omega}\left[ \vx_o \right]}.
\end{align*}
This can be expressed as a cross-correlation with $\vg_{\sigma^2}$:
\begin{equation*}
  2\varphi(\vtau)
  =
  \ip*{ \vg_{\sigma^2} \conv \left[\vy \circ \vtau\right]^{\circleddot
  2}}{\sP_{\Omega}\left[\One\right]}
  + \ip*{\vg_{\sigma^2}}{\One} \norm*{ \sP_{\Omega}\left[ \vx_o \right] }_F^2
  +
  2 \ip*{ \vg_{\sigma^2} \conv (\vy\circ \vtau) } {\sP_{\Omega}\left[ \vx_o \right]},
\end{equation*}
and taking adjoints gives finally
\begin{equation*}
  2\varphi(\vtau)
  =
  \ip*{ \left[\vy \circ \vtau\right]^{\circleddot 2}}{\vg_{\sigma^2} \conv
  \sP_{\Omega}\left[\One\right]}
  + \ip*{\vg_{\sigma^2}}{\One} \norm*{ \sP_{\Omega}\left[ \vx_o \right] }_F^2 + 2 \ip*{
  \vy\circ \vtau} {\vg_{\sigma^2} \conv \sP_{\Omega}\left[ \vx_o \right]}.
\end{equation*}
This gives a convolution-free gradient step implementation for this cost (aside from pre-computing
the fixed convolutions in the cost), and also yields a useful interpretation of the cost-smoothed
formulation \Cref{eq:formulation_costsmooth}, and its disadvantages relative to the
background-modeled formulation \Cref{eq:formulation_withbg}.

\newcommand{\nadj}{^{-*}}
\paragraph{Filter gradient for complementary smoothing formulation
\Cref{eq:complementary_smoothing}.}
Relative to the standard registration model formulation \Cref{eq:formulation_reg}, the
complementary smoothing spike registration formulation \Cref{eq:complementary_smoothing} contains
an extra complicated transformation-dependent gaussian filter. We provide a key lemma below for the
calculation of the gradient with respect to the parameters of the complementary smoothing cost in
``standard parameterization'' (see the next paragraph below). The full calculation follows
the proof of \Cref{prop:reg_gradient} with an extra ``product rule'' step and extra adjoint
calculations.
\begin{proposition}
  \label{prop:complementary_smoothing_filter_grad}
  Given fixed $\sigma^2 > \sigma_0^2 > 0$, define $\vSigma(\vA) = \sigma^2 \vI - \sigma_0^2
  (\vA\adj \vA)\inv$, and define
  \begin{equation*}
    \vg(\vA) = \sqrt{\det(\vA\adj \vA)}\vg_{\vSigma(\vA)},
  \end{equation*}
  where the filter is $m$ by $n$ and the domain is the open set $\set{\vA \given \vsigma \vI -
  \vsigma_0^2(\vA\adj\vA)\inv \succ \Zero}$. Then for any fixed $\vV \in \bbR^{m \times n}$, one
  has
  \begin{equation*}
    \nabla_{\vA}[ \ip{\vV}{\vg} ](\vA) =
    \sigma_0^2\vA\nadj\left(
      \vSigma(\vA)\inv\left( \sum_{i,j} V_{ij} g(\vA)_{ij}  \vw_{ij} \vw_{ij}\adj\right)
      \vSigma(\vA)\inv - \ip{\vg(\vA)}{\vV} \vSigma(\vA)\inv
    \right)(\vA\adj \vA)\inv + \ip{\vg(\vA)}{\vV}\vA\nadj,
  \end{equation*}
  where $\vA\nadj = (\vA\inv)\adj$.
  \begin{proof}
    For $(i, j) \in \set{0, \dots, m-1} \times \set{0, \dots, n-1}$, let $\vw_{ij} = (i -
    \floor{m/2}, j - \floor{n/2})$. Then we have
    \begin{equation*}
      \vg(\vA) = \frac{1}{2\pi} \sum_{i,j} \ve_{ij} \exp \left(
        -\frac{1}{2} \vw_{ij}\adj \vSigma(\vA)\inv \vw_{ij}
        -\frac{1}{2} \log\det \Sigma(\vA) + \frac{1}{2} \log\det \vA\adj \vA
      \right).
    \end{equation*}
    Let $\diff \vg$ denote the differential of $\vA \mapsto \vg(\vA)$. By the chain rule, we have
    for any $\vDelta \in \bbR^{2\times 2}$
    \begin{equation*}
      \ip*{\vV}{\diff \vg_{\vA}(\vDelta)}
      = 
      \frac{1}{2}
      \sum_{i,j} V_{ij} g(\vA)_{ij} 
      \dac \left[
        -\vw_{ij}\adj \vSigma(\vA+t\vDelta)\inv \vw_{ij} -\log\det \Sigma(\vA+t\vDelta) 
        + \log\det (\vA+t\vDelta)\adj (\vA+t\vDelta)
      \right].
    \end{equation*}
    We need the differential of several mappings here. We will use repeatedly that if $\vX \in
    \GL(2)$ and $\vW\in \bbR^{2\times 2}$, one has
    \begin{equation}
      \diff[ \vX \mapsto \ip*{\vW}{\vX\inv}]_{\vX}(\vDelta) = - \ip*{\vDelta}{\vX\nadj \vW
      \vX\nadj}.
      \label{eq:matrix_inv_differential}
    \end{equation}
    Applying \Cref{eq:matrix_inv_differential} and the chain rule, we get
    \begin{align*}
      \diff[ \ip*{\vW}{\vSigma}]_{\vA}(\vDelta)
      &= \sigma_0^2 \ip*{(\vA\adj \vA)\inv \vW (\vA\adj \vA)\inv}{\vDelta\adj
      \vA + \vA\adj \vDelta}\\
      &= \sigma_0^2 \ip*{\vA\nadj (\vW + \vW\adj) (\vA\adj \vA)\inv}{\vDelta}.
      \labelthis \label{eq:cov_differential}
    \end{align*}
    In particular, using the chain rule and \Cref{eq:matrix_inv_differential,eq:cov_differential}
    gives
    \begin{equation}
      \dac \left[
        \vw_{ij}\adj \vSigma(\vA + t\vDelta)\inv \vw_{ij}
      \right]
      =
      -2\sigma_0^2\ip*{\vA\nadj \vSigma(\vA)\inv \vw_{ij} \vw_{ij}\adj \vSigma(\vA)\inv (\vA\adj
      \vA)\inv}{\vDelta}.
      \label{eq:differential_term1}
    \end{equation}
    Next, using the Leibniz formula for the determinant, we obtain
    \begin{equation}
      \diff [\log\det ]_{\vX}(\vDelta) = \ip*{\vX\nadj}{\vDelta}.
      \label{eq:logdet_differential}
    \end{equation}
    The chain rule and \Cref{eq:cov_differential,eq:logdet_differential} thus give
    \begin{equation}
      \dac \left[
        \log\det \Sigma(\vA+t\vDelta) 
      \right]
      = 2\sigma_0^2\ip*{\vA\nadj \vSigma(\vA)\inv (\vA\adj \vA)\inv}{\vDelta},
      \label{eq:differential_term2}
    \end{equation}
    and similarly
    \begin{equation}
      \dac \left[
        \log\det (\vA+t\vDelta)\adj (\vA+t\vDelta)
      \right]
      = 2\ip*{\vA\nadj}{\vDelta}.
      \label{eq:differential_term3}
    \end{equation}
    Combining \Cref{eq:differential_term1,eq:differential_term2,eq:differential_term3}, we have
    \begin{equation*}
      \ip*{\vV}{\diff \vg_{\vA}(\vDelta)}
      = 
      \sum_{i,j} V_{ij} g(\vA)_{ij} 
      \ip*{
        \sigma_0^2\vA\nadj\left(
          \vSigma(\vA)\inv \vw_{ij} \vw_{ij}\adj \vSigma(\vA)\inv - \vSigma(\vA)\inv
        \right)(\vA\adj \vA)\inv
        + \vA\nadj
      }{\vDelta},
    \end{equation*}
    and the claim follows by distributing and reading off the gradient.\footnote{After
      distributing, the sum over $i,j$ in the first factor can be computed relatively efficiently
    using a Kronecker product.}
  \end{proof}
\end{proposition}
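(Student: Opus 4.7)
The plan is to differentiate $\vA \mapsto \ip{\vV}{\vg(\vA)}$ by first writing each pixel of the filter as an exponential and then applying the chain rule termwise. Absorbing the $\sqrt{\det(\vA\adj\vA)}$ prefactor into the exponent yields
\begin{equation*}
g(\vA)_{ij} = \frac{1}{2\pi} \exp\left( -\tfrac{1}{2} \vw_{ij}\adj \vSigma(\vA)\inv \vw_{ij} - \tfrac{1}{2}\log\det \vSigma(\vA) + \tfrac{1}{2}\log\det \vA\adj \vA \right),
\end{equation*}
so $\ip{\vV}{\vg(\vA)} = \sum_{i,j} V_{ij} g(\vA)_{ij}$, and differentiating in a direction $\vDelta$ pulls out a multiplicative factor $V_{ij} g(\vA)_{ij}$ times the differential of the three $\vA$-dependent summands in the exponent. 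The whole computation then reduces to expressing each of these three differentials as an inner product $\ip{\,\cdot\,}{\vDelta}$ and reading off the gradient.

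I would first assemble three standard matrix-calculus identities that carry the calculation: (i) $\diff[\vX \mapsto \ip{\vW}{\vX\inv}]_{\vX}(\vDelta) = -\ip{\vDelta}{\vX\nadj \vW \vX\nadj}$, from $\diff[\vX\inv] = -\vX\inv\,\diff\vX\,\vX\inv$ and the cyclic property of the Frobenius inner product; (ii) Jacobi's formula $\diff[\log\det]_{\vX}(\vDelta) = \ip{\vX\nadj}{\vDelta}$; and (iii) the product-rule identity $\diff[\vA \mapsto \vA\adj\vA]_{\vA}(\vDelta) = \vDelta\adj\vA + \vA\adj\vDelta$. Combining (i) with (iii) and the identity $\vA(\vA\adj\vA)\inv = \vA\nadj$ gives the key auxiliary formula $\diff[\vSigma]_{\vA}(\vDelta) = \sigma_0^2 (\vA\adj\vA)\inv (\vDelta\adj \vA + \vA\adj \vDelta)(\vA\adj\vA)\inv$, which after pairing with a symmetric outer matrix $\vM$ and using the cyclic trace identity collapses the two halves $\vDelta\adj\vA$ and $\vA\adj\vDelta$ into a single clean contraction $2\ip{\vA\vM}{\vDelta}$.

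Applying these identities to each exponent term yields contributions that assemble exactly into the stated formula. For the quadratic piece $\vw_{ij}\adj \vSigma(\vA)\inv \vw_{ij} = \ip{\vw_{ij}\vw_{ij}\adj}{\vSigma(\vA)\inv}$, chaining (i) through $\vSigma(\vA)$ using the symmetry of $\vSigma(\vA)\inv \vw_{ij}\vw_{ij}\adj \vSigma(\vA)\inv$ gives a $\vDelta$-contraction against $-2\sigma_0^2 \,\vA\nadj \vSigma(\vA)\inv \vw_{ij}\vw_{ij}\adj \vSigma(\vA)\inv (\vA\adj\vA)\inv$; for $\log\det \vSigma(\vA)$, chaining (ii) through $\vSigma(\vA)$ produces $2\sigma_0^2 \ip{\vA\nadj \vSigma(\vA)\inv (\vA\adj\vA)\inv}{\vDelta}$; and (ii) applied directly to $\log\det \vA\adj\vA$ gives $2\ip{\vA\nadj}{\vDelta}$. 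Weighting these three contributions by the exponent prefactors $(-\tfrac12, -\tfrac12, +\tfrac12)$ and the scalar $V_{ij} g(\vA)_{ij}$, then summing over $(i,j)$ (noting that the latter two contributions are $(i,j)$-independent, so the sum collapses to $\ip{\vg(\vA)}{\vV}$), and finally factoring $\sigma_0^2 \vA\nadj$ on the left and $(\vA\adj\vA)\inv$ on the right recovers the stated expression. The main obstacle is purely bookkeeping: carefully tracking the signs from the two uses of $\diff[\vX\inv]$, consistently exploiting the symmetry of $\vSigma(\vA)$ and $(\vA\adj\vA)\inv$ to merge the $\vDelta\adj\vA + \vA\adj\vDelta$ pair into a single $\vDelta$-contraction, and keeping straight which inner positions carry $\vSigma(\vA)\inv$ versus $(\vA\adj\vA)\inv$ within the resulting sandwich structure.
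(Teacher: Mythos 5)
Your proposal is correct and follows essentially the same route as the paper's proof: write each filter entry as a single exponential absorbing the $\sqrt{\det(\vA\adj\vA)}$ prefactor, differentiate the three exponent terms via the matrix-inverse differential, Jacobi's formula, and the product rule for $\vA\adj\vA$, then collapse the symmetric $\vDelta\adj\vA+\vA\adj\vDelta$ contractions and sum over $(i,j)$. The three intermediate differentials and the final assembly match the paper's exactly.
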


\paragraph{Differentiating costs in ``inverse parameterization''.} Our theoretical study of spike
alignment in \Cref{app:mc-spike-affine} and our experiments on the discretized objective
\Cref{eq:complementary_smoothing} in \Cref{sec:mcs_experiments} suggest strongly to prefer ``inverse
parameterization'' relative to standard parameterization of affine transformations for
optimization. By this, we mean the following: given a cost $\varphi(\vtau_{\vA, \vb})$ optimized
over affine transformations $(\vA, \vb)$, one optimizes instead $\varphi(\vtau_{\vA\inv,
-\vA\inv\vb})$. This nomenclature is motivated by, in the continuum, the inverse of the affine
transformation $\vx \mapsto \vA \vx + \vb$ being $\vx \mapsto \vA\inv(\vx - \vb)$. Below, we show
the chain rule calculation that allows one to easily obtain gradients for inverse-parameterized
objectives as linear corrections of the standard-parameterized gradients.

\newcommand{\VA}{\vDelta_{\vA}}
\newcommand{\VB}{\vDelta_{\vb}}
\begin{proposition}
  \label{prop:inverse_parameterization}
  Let $\varphi : \bbR^{2\times 2} \times \bbR^2 \to \bbR$, and let $F(\vA, \vb) = (\vA\inv,
  -\vA\inv \vb)$ denote the inverse parameterization mapping, defined on $\GL(2) \times \bbR^2$.
  One has
  \begin{align*}
    \nabla_{\vA}[\varphi\circ F](\vA, \vb) &= -\vA\nadj\left( \nabla_{\vA}[\varphi]\circ
    F(\vA,\vb)\right) \vA\nadj + \vA\nadj\left(\nabla_{\vb}[\varphi] \circ F(\vA, \vb)\right) (\vA
    \inv
    \vb)\adj, \\
    \nabla_{\vb}[\varphi\circ F](\vA, \vb) &= -\vA\nadj\left(\nabla_{\vb}[\varphi] \circ F(\vA,
    \vb)\right),
  \end{align*}
  where $\vA\nadj = (\vA\inv)\adj$.
  \begin{proof}
    Let $\diff[\varphi \circ F]$ denote the differential of $\varphi \circ F$ (and so on). We have
    for $\vDelta_{\vA}$ and $\vDelta_{\vb}$ the same shape as $\vA$ and $\vb$
    \begin{align*}
      \diff F_{\vA, \vb}(\VA, \VB) &= \dac\left(
        (\vA + t \VA)\inv, -(\vA + t \VA)\inv(\vb + t\VB)
      \right) \\
      &= \left( -\vA \inv \VA \vA\inv, -\left( \vA\inv \VB - \vA\inv \VA \vA\inv \vb
      \right)\right)
    \end{align*}
    where the asserted expression for the derivative through the matrix inverse follows from, say,
    the Neumann series. Now, the chain rule and the definition of the gradient imply
    \begin{equation*}
      \diff[ \varphi \circ F]_{\vA, \vb}(\VA, \VB)
      = \ip*{
        \left(
          \nabla_{\vA}[\varphi \circ F](\vA, \vb),
          \nabla_{\vb}[\varphi \circ F](\vA, \vb)
        \right)
        }{
        \left( -\vA \inv \VA \vA\inv, -\left( \vA\inv \VB - \vA\inv \VA \vA\inv \vb
        \right)\right)
      },
    \end{equation*}
    and the claim follows by distributing and taking adjoints in order to read off the gradients
    from the previous expression.
  \end{proof}
\end{proposition}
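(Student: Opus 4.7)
The plan is a direct application of the chain rule together with adjoint manipulations to repackage the resulting bilinear form into gradients. First, I would compute the differential of the inverse parameterization map $F(\vA, \vb) = (\vA\inv, -\vA\inv \vb)$. The matrix-inverse differential is $\diff[\vA \mapsto \vA\inv]_{\vA}(\VA) = -\vA\inv \VA \vA\inv$, which follows from the first-order term of the Neumann series $(\vA + t\VA)\inv = \vA\inv - t \vA\inv \VA \vA\inv + O(t^2)$. Combining this with the product rule for the second component gives
\begin{equation*}
\diff F_{\vA, \vb}(\VA, \VB) = \left( -\vA\inv \VA \vA\inv,\; -\vA\inv \VB + \vA\inv \VA \vA\inv \vb \right).
\end{equation*}

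Next I would apply the chain rule: since the gradient of $\varphi$ at $F(\vA, \vb)$ pairs with the differential of $F$ through the euclidean inner product on $\bbR^{2 \times 2} \times \bbR^2$, one obtains
\begin{equation*}
\diff[\varphi \circ F]_{\vA, \vb}(\VA, \VB) = \ip*{\nabla_{\vA}[\varphi]\circ F}{-\vA\inv \VA \vA\inv} + \ip*{\nabla_{\vb}[\varphi]\circ F}{-\vA\inv \VB + \vA\inv \VA \vA\inv \vb}.
\end{equation*}

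The final step is to rewrite each term in the canonical forms $\ip{\cdot}{\VA}$ and $\ip{\cdot}{\VB}$ to read off gradients. Two adjoint identities suffice: $\ip{\vM}{\vA\inv \vX \vA\inv} = \ip{\vA\nadj \vM \vA\nadj}{\vX}$ for matrix arguments, and the outer-product identity $\ip{\vu}{\vM \vv} = \ip{\vu \vv\adj}{\vM}$ to convert a vector inner product into a matrix inner product. Applying the first identity to the $\nabla_{\vA}[\varphi]$ contribution yields the $-\vA\nadj (\nabla_{\vA}[\varphi]\circ F) \vA\nadj$ term; applying both in succession to the $\nabla_{\vb}[\varphi]$ cross-term (which has the form $\ip{\vu}{\vA\inv \VA (\vA\inv \vb)}$) yields the $\vA\nadj (\nabla_{\vb}[\varphi]\circ F)(\vA\inv \vb)\adj$ term. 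For the $\VB$ direction only one adjoint is needed, producing the stated $\vb$-gradient immediately.

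The calculation is essentially routine; the only mild obstacle is bookkeeping the adjoints so that the outer-product factor $(\vA\inv \vb)\adj$ lands on the correct side in the $\vA$-gradient cross-term, and consistently identifying $(\vA\inv)\adj = \vA\nadj$. No substantive analytic difficulty arises beyond these rearrangements.
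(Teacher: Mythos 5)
Your proposal is correct and follows essentially the same route as the paper's proof: differentiate $F$ via the Neumann series, apply the chain rule to pair $\nabla\varphi$ at $F(\vA,\vb)$ with $\diff F$, and read off the gradients by taking adjoints (including the outer-product identity for the cross-term). If anything, your statement of the chain-rule step is written more carefully than the paper's, which pairs the differential of $F$ with $\nabla[\varphi\circ F]$ where $\nabla[\varphi]\circ F$ is plainly intended.
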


We remark that centering, as discussed in \Cref{sec:parametric}, can be implemented identically
to the standard parameterization case when using inverse parameterization.

\subsection{Additional Experiments and Experimental Details} \label{sec:extra_experiment_details}

\paragraph{General details for experiments.} We use normalized cross correlation (NCC) and
zero-normalized cross correlation (ZNCC) for measuring the performance of registration on textured
and spike data respectively. Specifically, for two multichannel images $\mb X, \mb Y \in
\bb{R}^{m\times n\times c}$, let $\tilde{\mb X}$ and $\tilde{\mb Y}$ be the channel-wise
mean-subtracted images from $\mb X$ and $\mb Y$. The quantities NCC and ZNCC are defined as
$\mathrm{NCC}(\vX, \vY) = \frac{\ip{\mb X}{\mb Y}}{\|{\mb X}\|_F \|{\mb Y}\|_F}$ and
$\mathrm{ZNCC}(\vX, \vY) = \frac{\ip{\tilde{\mb X}} {\tilde{\mb Y}}}{\|\tilde{\mb X}\|_F
\|\tilde{\mb Y}\|_F}$.

\subsubsection{\Cref{fig:centerpiece} Experimental Details}

In the experiment comparing the complexity of optimization and covering-based methods for
textured motif detection shown in \Cref{fig:centerpiece}, the raw background image used has
dimension $2048\times 1536$. The crab template is first placed at the center, then a random
transformation is applied to generate the scene $\vy$. Translation consists of random amounts on
both $x$ and $y$ directions uniformly in $[-5, 5]$ pixels. Euclidean transforms in addition apply
a rotation with angle uniformly from $[-\frac{\pi}{4}, \frac{\pi}{4}]$. Similarity transforms in
addition applies a scaling uniformly from $[0.8, 1.25]$. Generic affine transforms are
parameterized by a transformation matrix $\mb A\in\mathbb{R}^{2\times 2}$ and offset vector $\mb
b\in \mathbb{R}^2$, with the singular values of $\mb A$ uniformly from $[0.8, 1.25]$ and the left
and right orthogonal matrices being rotation matrices with angle uniformly in $[-\frac{\pi}{4},
\frac{\pi}{4}]$. For each of the 4 modes of transform, 10 random images are generated. 
The optimization formulation used is \Cref{eq:formulation_withbg}, with $\vx_o$ the crab body
motif shown in \Cref{fig:motif}(a).
The
optimization-based method uses a multi-scale scheme, which uses a sequence of decreasing values of
$\sigma$ and step sizes, starting at $\sigma=5$ and step size $0.005\sigma$ (except for affine
mode which starts at $\sigma=10$), with $\sigma$ halved every 50 iterations until stopping
criteria over the ZNCC is met, where ZNCC is calculated over the motif support $\Omega$ only. For
each value of $\sigma$, a dilated support $\tilde{\Omega}$ is used, which is the dilation of
$\Omega$ two $\sigma$ away from the
support of the motif. The background model covers the region up to $5\sigma$ away from the motif.
The background $\bm\beta$ is initialized as a gaussian-smoothed version of the difference between
the initialized image and the ground truth motif, and then continuously updated in the
optimization. For the first 5 iterations of every new scale $\sigma$, only the background is
updated while the transformation parameters are held constant. The covering-based method samples a
random transform from the corresponding set of transforms used in each try.

\subsubsection{\Cref{fig:convergence} Experimental Details}

In the experiment of verifying the convergence of multichannel spike registration as shown in
\Cref{fig:convergence}, the motif consists of 5 spikes placed at uniformly random positions
in a $61\times 81$ image. To allow the spike locations to take non-integer values, we represent
each spike as a gaussian density with standard deviation $\sigma_0=3$ centered at the spike
location, and evaluated on the grid. A random affine transformation of the motif is generated as
the scene. As a result, we are able to use this $\sigma_0$-smoothed input in
\Cref{eq:complementary_smoothing} \textit{without extra smoothing}, and we can compensate the
variance of the filter applied to $\vx_o$ in the formulation to account for the fact that we
already smoothed by $\sigma_0$ when generating the data.
The smoothing level in the registration is chosen according to equation
\Cref{eq:sigma-bound} in \Cref{thm:multichannel-spike}. Due to the discretization effect and
various artifacts, the step sizes prescribed in \Cref{thm:multichannel-spike} will lead to
divergence, so we reduce the step sizes by multiplying a factor of 0.2.

\subsubsection{Further Experimental Details}

The beach background used for embedding the crab template throughout the experiments is
\texttt{CC0}-licensed and available online:
\url{https://www.flickr.com/photos/scotnelson/28315592012}.
Our code and data are available at \codeurl.

\subsection{Canonized Object Preprocessing and Calibration for Hierarchical Detection}
\label{sec:hier_preprocessing}

The hierarchical detection network implementation prescription in \Cref{sec:hier_networks}
assumes the occurrence maps $\vx_v$ for $v \in V \setminus \set{1, \dots, K}$ are given; in
practice, these are first calculated using the template $\vy_o$ and its motifs, by a process we
refer to as extraction. Simultaneously, to extract these occurrence maps and have them be useful
for subsequent detections it is necessary to have appropriate choices for the various
hyperparameters involved in the network: we classify these as `registration' hyperparameters (for
each $v \in V$, the step size $\nu_v$; the image, scene, and input smoothing parameters
$\sigma_v^2$, $\sigma_{0, v}^2$, and $\sigma_{\mathrm{in}}^2$; the number of registration
iterations $T_v$; and the vertical (``height'') and horizontal (``width'') stride sizes
$\Delta_{H, v}$ and $\Delta_{W, v}$) or `detection' hyperparameters (for each $v \in V$, the
suppression parameter $\alpha_v$ and the threshold parameter $\gamma_v$). We describe these issues
below, as well as other relevant implementation issues.

\paragraph{Hyperparameter selection.} We discuss this point first, because it is necessary to
process the `leaf' motifs before any occurrence maps can be extracted. In practice, we `calibrate'
these hyperparameters by testing whether detections succeed or fail given the canonized template
$\vy_o$ as input to the (partial) network. Below, we first discuss hyperparameters related to
visual motifs (i.e., the formulation \Cref{eq:hier_strided_textured}), then hyperparameters for
spiky motifs (i.e., the formulation \Cref{eq:hier_strided_spiky}).

\subparagraph{Stride density and convergence speed:} The choice of these parameters encompasses
a basic computational tradeoff: setting $T_v$ larger allows to leverage the entire basin of
attraction of the formulations \Cref{eq:hier_strided_textured,eq:hier_strided_spiky}, enabling
more reliable values of $\min_{\vlambda \in \Lambda_v}\mathrm{loss}(v, \vlambda)$ and the
use of larger values of $\Delta_{H, v}$ and $\Delta_{W, v}$; however, it requires more
numerical operations (convolutions and interpolations) for each stride $\vlambda \in
\Lambda_{v}$. 
In our experiments we err on the side of setting $T_v$ large, and tune the stride sizes
$\Delta_{W, v}$ and $\Delta_{H, v}$ over multiples of $4$ (setting them as large as possible
while being able to successfully detect motifs).
The choice of the step sizes $\nu_v$ is additionally complicated by the smoothing and
motif-dependence of this parameter. As we describe in \Cref{sec:unrolling}, we 
treat the step sizes taken on each component of $(\vA, \vb)$ independently, and in our
experiments use a small multiple (i.e., $1/10$) of $t_v^{\vA} = 4\sigma/\max\set{m_v^2,
n_v^2}$ and $t_v^{\vb} = 2\sigma /\max \set{m_v, n_v}$ for all
visual motifs. %
This prescription is a heuristic that we find works well for the motifs and
smoothing parameters (see the next point below) we test, inspired by the theoretical
prescriptions in \Cref{sec:theory} for spike alignment that we discuss later in this section.

\subparagraph{Smoothing parameters:} The smoothing level $\sigma_v^2$ in
\Cref{eq:hier_strided_textured} increases the size of the basin of attraction when set larger.
For this specific formulation, we find it more efficient to expand the basin by striding, and 
enforce a relatively small value of $\sigma^2_v = 9$ for all visual motifs. 
Without input
smoothing, we empirically observe that the first-round-multiscale cost-smoothed formulation
\Cref{eq:hier_strided_textured} is slightly unstable with respect to high-frequency content in
$\vy$: this motivates us to introduce this extra smoothing with variance
$\sigma_{\mathrm{in}}^2 = 9/4$, which removes interpolation artifacts that hinder convergence.
We find the multiscale smoothing mode of operation described in \Cref{sec:hier_networks} to be
essential for distinguishing between strides $\vlambda$ which have ``failed'' to register the
motif $\vx_v$ and those that have succeeded, through the error $\mathrm{loss}(\vlambda, v)$:
in all experiments, we run the second-phase multiscale round for
\Cref{eq:hier_strided_textured} as described in \Cref{sec:hier_networks}, for $256$ iterations
and with $\sigma^2 = 10^{-2}$ and $\sigma_{\mathrm{in}}^2 = 10^{-12}$.
We describe the choice of $\sigma_{0, v}^2$ below, as it is more of a spike registration
hyperparameter (c.f.\ \Cref{eq:hier_strided_occmaps}).

\subparagraph{Detection parameters:} The scale parameters $\alpha_v$ are set based on the size
of the basin of attraction around the true transformation of $\vx_v$, and in particular on the
scale of $\mathrm{loss}(\vlambda, v)$ at ``successes'' and ''failures'' to register. In our
experiments, we simply set $\alpha_v = 1$ for visual motifs. The choice of the threshold
parameter $\gamma_v$ is significantly more important: it accounts for the fact that the final
cost $\mathrm{loss}(\vlambda, v)$ at a successful registration is sensitive to both the motif
$\vx_v$ \textit{and} the background/visual clutter present in the input $\vy$. In our
experiments in \Cref{sec:hier_results}, we tune the parameters $\gamma_v$ on a per-motif basis
by calculating $\mathrm{loss}(\vlambda, v)$ for embeddings $\vy_o \circ \vtau_0$ for $\vtau_0
\in \SO(2)$ up to some maximum rotation angle in visual clutter, classifying each $\vlambda$
as either a successful registration or a failure, and then picking $\vgamma_v$ to separate the
successful runs for all rotation angles from the failing runs. For the motifs and range of
rotation angles we consider, we find that such a threshold always exists. However, at larger
rotation angles we run into issues with the left and right eye motifs being too similar to
each other, leading to spurious registrations and the non-existence of a separating threshold.
In practice, this calibration scheme also requires a method of generating visual clutter that
matches the environments one intends to evaluate in. The calibrated threshold parameters used
for our experiments in \Cref{sec:hier_results} are available in our released implementation.

\subparagraph{Hyperparameters for spiky motifs:} The same considerations apply to hyperparameter
selection for spiky motifs (i.e., the formulation \Cref{eq:hier_strided_spiky}). However, the
extra structure in such data facilitates a theoretical analysis that corroborates the intuitive
justifications for hyperparameter tradeoffs we give above and leads to specific prescriptions for
most non-detection hyperparameters, allowing them to be set in a completely tuning-free fashion.
We present these results in \Cref{sec:theory}. For detection hyperparameters, we follow the same
iterated calibration process as for visual motifs, with scale parameters $\alpha_v = 2.5 \cdot
10^5$ (typical values of the cost \Cref{eq:hier_strided_spiky} are much smaller than those of the
cost \Cref{eq:hier_strided_textured}, due to the fact that the gaussian density has a small $L^2$
norm).  For the occurrence map smoothing parameters $\sigma_{0, v}^2$, our network construction
above necessitates setting these parameters to be the same for all $v \in V$; we find empirically
that a setting $\sigma_{0, v}^2 = 9$ is sufficient to avoid interpolation artifacts. 
Finally, the bounding box masks $\Omega_{v}$ are set during the extraction process (see below),
and are dilated by twice the total size of the filters $\vg_{\sigma_v^2}$. In practice, when
implementing gaussian filters, we make the image size square, with side lengths $6\sigma$
(rounded to the next largest odd integer).

\paragraph{Occurrence map extraction.} 

Although the criteria above (together with the theoretical guidance from \Cref{sec:theory}) are
sufficient to develop a completely automatic calibration process for the various hyperparameters
above, in practice we perform calibration and occurrence map extraction in a `human-in-the-loop'
fashion. The extraction process can be summarized as follows (it is almost identical to the
detection process described in \Cref{sec:hier_networks}, with a few extra steps implicitly
interspersed with calibration of the various hyperparameters):
\begin{enumerate}
  \item \textbf{Use the canonized template as input:} We set $\vy_o$ as the network's input.
  \item \textbf{Process leaf motifs:} Given suitable calibrated settings of the hyperparameters
    for leaf motifs $v \in V$, perform detection and generate all occurrence maps $\vomega_v$ via
    \Cref{eq:hier_strided_occmaps}.
  \item \textbf{Extract occurrence motifs at depth $\diam(G) - 1$:} For each $v$ with $d(v) =
    \diam(G) - 1$, we follow the assumptions made in \Cref{sec:hier_data_model} (in particular,
    that each visual motif occurs only once in $\vy_o$ and $G$ is a tree) and after aggregating
    the occurrence map from $v$'s child nodes via \Cref{eq:multichannel_spike_map}, we extract
    $\vx_v$ from $\vy_v$ by cropping to the bounding box for the support of $\vy_v$. Technically,
    since \Cref{eq:hier_strided_occmaps} uses a gaussian filter, the support will be nonzero
    everywhere, and instead we threshold at a small nonzero value (e.g.\ $1/20$ in our
    experiments) to determine the ``support''.
  \item \textbf{Continue to the root of $G$:} Perform registration to generate the occurrence maps
    for nodes at depth $\diam(G) - 1$, then continue to iterate the above steps until the root
    node is reached and processed.
\end{enumerate}
Note that the extracted occurrence motifs $\vx_v$ for $v \in V \setminus \set{1, \dots, K}$ depend
on proper settings of the registration and detection hyperparameters: if these parameters are set
imprecisely, the extracted occurrence maps will not represent ideal detections (e.g. they may not
be close to a full-amplitude gaussian at the locations of the motifs in $\vy_o$ as they should, or
they may not suppress failed detections enough).

\paragraph{Other implementation issues.}
The implementation issue of centering, discussed in \Cref{sec:parametric}, is relevant to the
implementation of the unrolled solvers for \Cref{eq:hier_strided_spiky,eq:hier_strided_occmaps}.
We find that a useful heuristic is to center the transformation $\vtau$ at the location of the
center pixel of the embedded motif $\vx_v$ (i.e., for a stride $\vlambda \in \Lambda_v$, at the
coordinates $\vlambda + ( (m_v - 1)/2, (n_v - 1)/2)$). To implement this centering, the locations
of the detections in the spike map definition \Cref{eq:hier_strided_occmaps} need to have the
offsets $( (m_v - 1)/2, (n_v - 1)/2)$ added.

The network construction in \Cref{sec:hier_networks} relies on the extraction process described
above to employ an identical enumeration strategy in the traversal of the graph $G$ as the
detection process (i.e., assuming that nodes are ordered in increasing order above). In our
implementation described in \Cref{sec:hier_results}, we instead label nodes arbitrarily when
preparing the network's input, and leave consistent enumeration of nodes during traversal to the
NetworkX graph processing library \cite{SciPyProceedings_11}.

\section{Proof of \Cref{thm:multichannel-spike}} \label{app:mc-spike-affine}
We consider a continuum model for multichannel spike alignment, motivated by the higher-level
features arising in the hierarchical detection network developed in \Cref{sec:motif}: signals
$\vX$ are represented as elements of $\bbR^{\bbR \times \bbR \times C}$, and are identifiable with
$C$-element real-valued vector fields on the (continuous, infinite) image plane $\bbR^2$. In this
setting, we write $\norm{\vX}_{L^2}^2 = \sum_{i=1}^C \norm{\vX_i}_{L^2}^2$ for the natural product
norm (in words, the $\ell^2$ norm of the vector of channelwise $L^2$ norms of $\vX$). For $\vp \in
\bbR^2$, let $\vdelta_{\vp} \in \bbR^{\bbR \times \bbR}$ denote a Dirac distribution centered at
$\vp$, defined via
\begin{equation*}
  \int_{\bbR^2} \vdelta_{\vp}(\vx) f(\vx) \diff \vx = f(\vp)
\end{equation*}
for all Schwartz functions $f$ \cite[\S I.3]{Stein1971-ed}. This models a `perfect' spike signal.
For $\vp \in \bbR^2$ and $\vM \in \bbR^{2\times 2}$ positive semidefinite, let $\vg_{\vp, \vM}$
denote the gaussian density on $\bbR^2$ with mean $\vp$ and covariance matrix $\vM$.
Consider a target signal  
\begin{equation} \label{eqn:X0-MCSM}
    \mb X_o = \sum_{i = 1}^c \mb \delta_{\mb v_i} \otimes  \mb e_i, 
\end{equation}
and an observation  
\begin{equation} \label{eqn:X-MCSM}
    \mb X = \sum_{i=1}^c  \mb \delta_{\mb u_i} \otimes \mb e_i
\end{equation}
satisfying 
\begin{equation} \label{eqn:MCSM-spike-rel}
    \mb v_i = \mb A_\star \mb u_i + \mb b_\star
\end{equation}
for some $(\vA_{\star}, \vb_{\star}) \in \GL(2) \times \bbR^2$. These represent the unknown
ground-truth affine transform to be recovered.
Consider the objective function
\begin{equation*}
  \varphi_{L^2,\sigma}(\mb A, \mb b) \equiv \frac{1}{2c} \left\| \vg_{\mb 0,\sigma^2
    \mb I - \sigma_0^2 (\mb A^* \mb A)^{-1}} \conv \left( \mr{det}^{1/2}( \mb A^* \mb A ) \left(
  \vg_{\mb 0,\sigma_0^2 \mb I} \conv \mb X \right) \circ \mb \tau_{\mb A, \mb b} \right) -
  \vg_{\mb 0,\sigma^2 \mb I} \conv {\mb X_o} \right\|_{L^2}^2, 
\end{equation*}
where $\vA\adj$ denotes the transpose, convolutions are applied channelwise, and for a signal $\vS
\in \bbR^{\bbR \times \bbR \times c}$, $\vS \circ \vtau_{\vA, \vb}(u, v) = \vS(a_{11} u + a_{12} v
+ b_{1}, a_{21}u + a_{22}v + b_2)$.
We study the following ``inverse parameterization'' of this function: 
\begin{equation*}
 \varphi^{\mr{inv}}_{L^2,\sigma}(\mb A, \mb b) \equiv \varphi_{L^2,\sigma}( \mb A^{-1}, - \mb A^{-1} \mb b ).
\end{equation*}
We analyze the performance of gradient descent for solving the optimization problem 
\begin{equation*}
        \min_{\mb A, \mb b} \varphi^{\mr{inv}}_{L^2,\sigma}(\mb A, \mb b).
\end{equation*}
Under mild conditions, local minimizers of this problem are global. Moreover, if $\sigma$ is set appropriately, the method exhibits linear convergence to the truth: 

\begin{theorem}[Multichannel Spike Model, Affine Transforms, $L^2$] \label{thm:spike_reg_app}
  Consider an instance of the
  multichannel spike model \eqref{eqn:X0-MCSM}-\eqref{eqn:X-MCSM}-\eqref{eqn:MCSM-spike-rel}, with
  $\mb U = [\mb u_1, \dots, \mb u_c ] \in \bb R^{2 \times c}$. Assume that the spikes $\mb U$ are
  centered and nondegenerate, so that $\mb U \One = \mb 0$ and $\rank(\vU) = 2$. Then
  gradient descent
  \begin{eqnarray*}
    \mb A_{k+1} &=& \mb A_k - \nu t_{\mb A} \nabla_{\mb A} \varphi^{\mr{inv}}_{L^2,\sigma}(\mb
    A_k, \mb b_k ), \\ 
    \mb b_{k+1} &=& \mb b_k - \nu t_{\mb b} \nabla_{\mb b} \varphi^{\mr{inv}}_{L^2,\sigma}(\mb
    A_k, \mb b_k ) 
  \end{eqnarray*}
  with smoothing 
  \begin{equation*}
    \sigma^2 
    \ge 2\frac{\max_i\, \norm{\vu_i}_2^2}{s_{\min}(\vU)^2} %
    \left(  
      {s_{\max}(\vU)^2 \| \mb A_\star - \mb I \|_F^2}%
      + c{  \| \mb b_\star \|_2^2 }%
    \right)
  \end{equation*}
  and step sizes 
  \begin{eqnarray*}
    t_{\mb A} &=& \frac{c}{ s_{\max}(\vU)^2 }, \\
    t_{\mb b} &=& 1, \\
    \nu &=& 8 \pi \sigma^4,
  \end{eqnarray*}
  from initialization $\mb A_0 = \mb I, \mb b_0 = \mb 0$ satisfies 
  \begin{eqnarray}
    t_{\mb A}^{-1} \| \mb A_k - \mb A_\star \|_{F}^2 + \| \mb b_k - \mb b_\star \|_2^2 &\le& \left( 1 - \frac{1}{2 \kappa} \right)^{2k} \Bigl(  t_{\mb A}^{-1} \| \mb I - \mb A_\star \|_{F}^2 + \| \mb b_\star \|_{2}^2 \Bigr), \label{eqn:opt-main-bound}
  \end{eqnarray}
  where 
  \begin{equation*}
    \kappa = \frac{ s_{\max}( \mb U)^2 }{ s_{\min}( \mb U)^2 },
  \end{equation*}
  with $s_{\min}(\vU)$ and $s_{\max}(\vU)$ denoting the minimum and maximum
  singular values of the matrix $\vU$. 
\end{theorem}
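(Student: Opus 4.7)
The plan is to collapse the continuum $L^2$ objective to a finite-dimensional gaussian-mismatch sum via the inverse parameterization, then drive a joint Lyapunov linear contraction inductively. First, I would carry out the convolution calculation defining $\varphi^{\mathrm{inv}}_{L^2,\sigma}$. Inverse parameterization sends the outer covariance $\sigma^2\vI - \sigma_0^2(\vA\adj\vA)^{-1}$ to $\sigma^2\vI - \sigma_0^2\vA\vA\adj$ and the prefactor $\det^{1/2}(\vA\adj\vA)$ to its reciprocal. Because gaussians form a semigroup under convolution and transform covariantly under affine change of variables, pushing $\vg_{\vu_i,\sigma_0^2\vI}$ through $\vtau_{\vA^{-1},-\vA^{-1}\vb}$ and rescaling by $\det^{-1/2}(\vA\adj\vA)$ produces $\vg_{\vA\vu_i+\vb,\,\sigma_0^2\vA\vA\adj}$, and convolving with the outer filter $\vg_{\sigma^2\vI-\sigma_0^2\vA\vA\adj}$ exactly cancels the transformed covariance, leaving $\vg_{\vA\vu_i+\vb,\,\sigma^2\vI}$ in each channel. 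Since the target is $\sum_i \vg_{\vv_i,\sigma^2\vI}\otimes\ve_i$ with $\vv_i=\vA_\star\vu_i+\vb_\star$, the identity $\langle\vg_{\vp_1,\sigma^2\vI},\vg_{\vp_2,\sigma^2\vI}\rangle_{L^2}=(4\pi\sigma^2)^{-1}\exp(-\|\vp_1-\vp_2\|^2/(4\sigma^2))$ yields, after writing $\vP=\vA-\vA_\star$, $\vq=\vb-\vb_\star$, and $\vr_i=\vP\vu_i+\vq$,
\begin{equation*}
\varphi^{\mathrm{inv}}_{L^2,\sigma}(\vA,\vb)=\frac{1}{4\pi c\sigma^2}\sum_{i=1}^c\left(1-\exp\!\left(-\frac{\|\vr_i\|_2^2}{4\sigma^2}\right)\right).
\end{equation*}

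Next, differentiating this closed form gives $\nabla_\vA\varphi^{\mathrm{inv}}=(8\pi c\sigma^4)^{-1}\sum_i w_i\vr_i\vu_i\adj$ and $\nabla_\vb\varphi^{\mathrm{inv}}=(8\pi c\sigma^4)^{-1}\sum_i w_i\vr_i$ with weights $w_i=\exp(-\|\vr_i\|^2/(4\sigma^2))$. Plugging in the prescribed step sizes, writing $\vw=(w_1,\dots,w_c)\adj$ and $\vW=\diag\vw$, and using the centering $\vU\One=\Zero$, the iteration becomes
\begin{equation*}
\vP^+=\vP\bigl(\vI-\vU\vW\vU\adj/s_{\max}(\vU)^2\bigr)-\vq\vw\adj\vU\adj/s_{\max}(\vU)^2,\quad \vq^+=\bigl(1-\tfrac{1}{c}\tr\vW\bigr)\vq-\tfrac{1}{c}\vP\vU\vw.
\end{equation*}
Centering is essential: $\vU\vw=\vU(\vw-\bar w\One)$ vanishes when the $w_i$ are uniform, so in that ``ideal'' limit the dynamics split into a principal iteration on $\vP$ contracting at operator-norm rate $1-w/\kappa$ and an iteration $\vq\mapsto(1-w)\vq$, both bounded by $1-1/(2\kappa)$ whenever $w\geq 1/2$.

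The bulk of the work is then a joint Lyapunov contraction carried out inductively with $V_k\defn(s_{\max}(\vU)^2/c)\|\vP_k\|_F^2+\|\vq_k\|_2^2$, chosen to match the left-hand side of \Cref{eq:opt-main-boundmain} (since $8\pi\sigma^4/t_{\vA}=s_{\max}(\vU)^2/c$) and to place the two blocks on comparable scales. At initialization, $\|\vr_i^{(0)}\|^2\leq 2\max_i\|\vu_i\|^2\|\vI-\vA_\star\|_F^2+2\|\vb_\star\|^2$; combining the smoothing hypothesis \Cref{eq:sigma-bound} with the elementary max-ratio bound $(Ma+b)/(s_{\max}(\vU)^2 a+cb)\leq\max(M/s_{\max}(\vU)^2,1/c)$ and $\max_i\|\vu_i\|^2\geq s_{\max}(\vU)^2/c$ yields $\|\vr_i^{(0)}\|^2/(4\sigma^2)\leq 1/(4\kappa)$, so $w_i^{(0)}\geq e^{-1/(4\kappa)}>1/2$. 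The inductive step assumes $\min_i w_i^{(k)}=:w_{\min}\geq 1/2$, bounds the principal contraction through $\|\vI-\vU\vW\vU\adj/s_{\max}(\vU)^2\|_{\mathrm{op}}\leq 1-w_{\min}/\kappa$ and $|1-\tr\vW/c|\leq 1-w_{\min}$, and controls the coupling through $\|\vU\vw\|_2=\|\vU(\vw-\bar w\One)\|_2\leq s_{\max}(\vU)\sqrt{c}(1-w_{\min})$; a weighted Young's inequality $(a+b)^2\leq(1+\varepsilon)a^2+(1+\varepsilon^{-1})b^2$ with $\varepsilon$ of order $1/\kappa$ then absorbs the coupling into the slack between $1-w_{\min}/\kappa$ and $1-1/(2\kappa)$. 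This yields $V_{k+1}\leq(1-1/(2\kappa))^2 V_k$, which preserves the weight bound since $V$ is monotone decreasing; iterating gives \Cref{eq:opt-main-boundmain}.

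The hard part will be this final single-step bookkeeping. Because the weight lower bound is the fixed constant $e^{-1/4}$, not asymptotically $1$, the off-diagonal couplings do not vanish, yet the principal contraction only has headroom of roughly $(w_{\min}-1/2)/\kappa$ to absorb them. The particular Lyapunov weighting $s_{\max}(\vU)^2/c$ (placing the $\vP$ and $\vq$ updates on comparable scales) together with the factor-of-$2$ slack in $1-1/(2\kappa)$ versus the ideal $1-1/\kappa$ are precisely what is needed to close the Young's absorption uniformly in $\kappa\geq 1$; propagating the constants cleanly through this single-step bound is the most delicate piece of the argument.
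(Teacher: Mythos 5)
Your proposal is correct and follows essentially the same route as the paper: reduce the continuum objective to $\frac{1}{4\pi c\sigma^2}\sum_i(1-e^{-\|\vr_i\|^2/4\sigma^2})$ via gaussian convolution identities, write the error iteration as a linear system that decouples when the weights $w_i$ equal $1$ (using $\vU\One=\Zero$), and run an induction in which the smoothing hypothesis keeps $\|\vr_i\|^2/4\sigma^2$, hence $|1-w_i|$, below $O(1/\kappa)$ so the perturbation off the ideal case is absorbed into the $1-1/\kappa$ contraction. The only difference is bookkeeping in the last step: the paper bounds the operator norm of the full block iteration matrix by one triangle inequality ($\|\vM\|\le 1-1/\kappa+2\|\nu\dot{\vpsi}-\One\|_{\ell^\infty}$), whereas you run a $2\times 2$ comparison system with Young's inequality on the two blocks separately; both close to the same rate $1-1/(2\kappa)$.
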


\begin{proof} 
  Below, we use the notation $\norm{\vM}_{\ell^p \to \ell^q} =
  \sup_{\norm{\vx}_p \leq 1} \norm{\vM \vx}_q$.
  We begin by rephrasing the objective function in a simpler form: by properties of
  the gaussian density,
\begin{eqnarray*}
  \varphi_{L^2,\sigma}({\mb A},{\mb b}) &=& \frac{1}{2c} \sum_{i=1}^c \left\|
  \vg_{{\mb A}^{-1} (\mb u_i -  {\mb b}),\sigma^2 \mb I} - \vg_{\mb v_i,\sigma^2 \mb I}
  \right\|_{L^2}^2,
\end{eqnarray*}
whence by an orthogonal change of coordinates
\begin{eqnarray*}
  \varphi_{L^2,\sigma}({\mb A},{\mb b}) &=& \frac{1}{c} \sum_{i=1}^c \psi\left( \tfrac{1}{2} \| {\mb A}^{-1}(\mb u_i - \mb b) - \mb v_i \|_2^2 \right),
\end{eqnarray*}
where 
\begin{eqnarray*}
\psi( t^2 / 2 ) &=& \tfrac{1}{2} \left\| \mb g_{t\mb e_1,\sigma^2\vI } - \mb g_{\mb 0,\sigma^2\vI} \right\|_{L^2}^2 \\
&=& \frac{1}{4\pi \sigma^2} - \innerprod{ \mb g_{t\mb e_1,\sigma^2\vI} }{ \mb g_{\mb 0,\sigma^2\vI} } \\
&=& \frac{1}{4\pi \sigma^2} - \frac{1}{(2 \pi\sigma^2)^2} \left(\int_{\bbR} e^{-s^2 / \sigma^2 } d
  s \right)\left( \int_{\bbR} e^{- (s - t)^2 / 2 \sigma^2 } e^{- s^2 / 2 \sigma^2 }
  ds\right) \\
  &=& \frac{1}{4\pi \sigma^2} - \frac{ 2^{-1/2}}{ (2 \pi \sigma^2)^{3/2} } \int_{\bbR}
  e^{-\frac{(s - t/2)^2}{\sigma^2}} e^{- \frac{t^2}{4 \sigma^2}} ds \\
  &=& \frac{1}{4\pi\sigma^2}\left(1 - \exp\left( - \frac{t^2/2}{2 \sigma^2} \right)\right).
\end{eqnarray*} 
So 
\begin{equation*}
    \varphi^{\mr{inv}}_{L^2,\sigma}(\mb A, \mb b) = \varphi_{L^2,\sigma}(\mb A^{-1},-\mb A^{-1}\mb b) = \frac{1}{c} \sum_{i=1}^c \psi( \tfrac{1}{2} \| \mb A \mb u_i + \mb b - \mb v_i \|_2^2 ). 
\end{equation*}
Differentiating, we obtain
\begin{eqnarray*}
    \nabla_{\mb A} \varphi^{\mr{inv}}_{L^2,\sigma}(\mb A,\mb b) &=& \frac{1}{c} \sum_{i=1}^c \dot{\psi}( \tfrac{1}{2} \| \mb \delta_i \|_2^2 ) \mb \delta_i \mb u_i^* \\
    \nabla_{\mb b} \varphi^{\mr{inv}}_{L^2,\sigma}(\mb A,\mb b) &=& \frac{1}{c} \sum_{i=1}^c \dot{\psi}( \tfrac{1}{2} \| \mb \delta_i \|_2^2 ) \mb \delta_i, 
\end{eqnarray*}
where for concision
\begin{align*}
  \mb \delta_i &= \mb A \mb u_i + \mb b - \mb v_i \\
  &= (\vA - \vA_{\star})\vu_i + \vb - \vb_{\star}.
\end{align*}
In these terms, we have the following expression for a single iteration of gradient descent:
\begin{eqnarray*}
\mb A^+ &=& \mb A - \frac{t_{\mb A}}{c} \sum_{i=1}^c \nu \dot{\psi}(\tfrac{1}{2} \| \mb \delta_i \|_2^2 ) \mb \delta_i \mb u_i^* \\
&=& \mb A - \frac{t_{\mb A}}{c} \sum_{i=1}^c \nu \dot{\psi}(\tfrac{1}{2} \| \mb \delta_i \|_2^2 ) (\mb A - \mb A_\star ) \mb u_i \mb u_i^* - \frac{t_{\mb A}}{c} \sum_{i=1}^c \nu \dot{\psi}(\tfrac{1}{2} \| \mb \delta_i \|_2^2 ) ( \mb b - \mb b_\star ) \mb u_i^* \\
&=& \mb A - \frac{\nu t_{\mb A}}{c} (\mb A - \mb A_\star) \mb U \dot{\mb \Psi} \mb U^* - \frac{\nu t_{\mb A}}{c} ( \mb b - \mb b_\star ) \dot{\mb \psi}^* \mb U^*
\end{eqnarray*}
and 
\begin{eqnarray*}
\mb b^+ &=& \mb b - \frac{t_{\mb b}}{c} \sum_{i=1}^c \nu \dot{\psi}(\tfrac{1}{2} \| \mb \delta_i \|_2^2 ) \mb \delta_i \\
&=& \mb b - \frac{t_{\mb b}}{c} ( \mb b - \mb b_\star ) \innerprod{ \One }{ \nu \dot{\mb
\psi} } - \frac{\nu t_{\mb b}}{c} ( \mb A - \mb A_\star ) \mb U \dot{\mb \psi},
\end{eqnarray*}
where above, we have set 
\begin{equation}
  \dot{\mb \Psi} = \left[ \begin{array}{ccc} \dot{\psi}(\tfrac{1}{2} \| \mb \delta_1 \|_2^2 ) &
    & \\ & \ddots & \\ & & \dot{\psi}(\tfrac{1}{2} \| \mb \delta_c \|_2^2) \end{array} \right] \in \bb R^{c \times c}, \qquad \dot{\mb \psi} = \left[ \begin{array}{c} \dot{\psi}(\tfrac{1}{2} \| \mb \delta_1 \|_2^2 ) \\ \vdots \\ \dot{\psi}(\tfrac{1}{2} \| \mb \delta_c \|_2^2) \end{array} \right] \in \bb R^{c}.
  \end{equation}
Writing $\mb \Delta_{\mb A} = \mb A - \mb A_\star$, $\mb \Delta_{\mb b} = \mb b - \mb b_\star$, we
have
\begin{eqnarray*}
\mb \Delta_{\mb A}^+ &=&  \mb \Delta_{\mb A} \Bigl( \mb I - \tfrac{\nu t_{\mb A}}{c} \mb U \dot{\mb \Psi} \mb U^*
\Bigr) - \tfrac{\nu t_{\mb A}}{c} \mb \Delta_{\mb b} \dot{\mb \psi}^* \mb U^* \\
\mb \Delta_{\mb b}^+ &=& \Bigl( 1 - \tfrac{t_{\mb b}}{c} \innerprod{ \One }{ \nu \dot{\mb \psi} }
\Bigr) \mb \Delta_{\mb b} - \mb \Delta_{\mb A} \tfrac{\nu t_{\mb b}}{c} \mb U \dot{\mb \psi}. 
\end{eqnarray*}
To facilitate a convergence proof, we modify this equation to pertain to scaled versions of $\mb \Delta_{\mb A}$, $\mb \Delta_{\mb b}$: 
\begin{eqnarray*}
t_{\mb A}^{-1/2} \mb \Delta_{\mb A}^+ &=& (t_{\mb A}^{-1/2} \mb \Delta_{\mb A}) \Bigl( \mb I - \tfrac{t_{\mb A}}{c} \mb U (\nu \dot{\mb \Psi}) \mb U^* \Bigr)  - \tfrac{t_{\mb A}^{1/2} t_{\mb b}^{1/2}}{c} ( t_{\mb b}^{-1/2} \mb \Delta_{\mb b} ) (\nu \dot{\mb \psi})^* \mb U^* \\
t_{\mb b}^{-1/2} \mb \Delta_{\mb b}^+ &=& \Bigl( 1 - \tfrac{t_{\mb b}}{c} \innerprod{ \One}{ \nu \dot{\mb \psi} } \Bigr) (t_{\mb b}^{-1/2} \mb \Delta_{\mb b}) - (t_{\mb A}^{-1/2} \mb \Delta_{\mb A}) \tfrac{t_{\mb b}^{1/2} t_{\mb A}^{1/2}}{c} \mb U (\nu \dot{\mb \psi}). 
\end{eqnarray*}
In matrix-vector form, and writing $\bar{\mb \Delta}_{\mb A} = t_{\mb A}^{-1/2} \mb \Delta_{\mb A}$ and $\bar{\mb \Delta}_{\mb b} = t_{\mb b}^{-1/2} \mb \Delta_{\mb b}$, we have
\begin{eqnarray}
    \left[ \begin{array}{c} \mr{vec}(\bar{\mb \Delta}_{\mb A}) \\ \bar{\mb \Delta}_{\mb b}
      \end{array} \right]^+ &=& \left( \mb I_6 - \left[ \begin{array}{cc} \tfrac{t_{\mb A}}{c} \mb U
    (\nu \dot{\mb \Psi}) \mb U^* \otimes \mb I_2 & \tfrac{t_{\mb A}^{1/2} t_{\mb b}^{1/2}}{c} (\mb U
(\nu \dot{\mb \psi}) ) \otimes \mb I_2 \\  \tfrac{t_{\mb A}^{1/2} t_{\mb b}^{1/2}}{c} ((\nu \dot{\mb
\psi})^* \mb U^* ) \otimes \mb I_2 & \tfrac{t_{\mb b}}{c} \innerprod{ \One}{ \nu \dot{\mb
\psi} } \otimes \mb I_2 \end{array} \right] \right) \left[ \begin{array}{c} \mr{vec}(\bar{\mb
\Delta}_{\mb A}) \\ \bar{\mb \Delta}_{\mb b} \end{array} \right] \nonumber \\ 
    &=& \left( \mb I_6 - \left[ \begin{array}{cc} \tfrac{t_{\mb A}}{c} \mb U (\nu \dot{\mb \Psi})
      \mb U^* & \tfrac{t_{\mb A}^{1/2} t_{\mb b}^{1/2}}{c} (\mb U (\nu \dot{\mb \psi}) )  \\
  \tfrac{t_{\mb A}^{1/2} t_{\mb b}^{1/2}}{c} ((\nu \dot{\mb \psi})^* \mb U^* ) & \tfrac{t_{\mb
  b}}{c} \innerprod{ \One}{ \nu \dot{\mb \psi} } \end{array} \right] \otimes \mb I_2
\right) \left[ \begin{array}{c} \mr{vec}(\bar{\mb \Delta}_{\mb A}) \\ \bar{\mb \Delta}_{\mb b}
\end{array} \right] \nonumber \\
    &\doteq& \mb M \left[ \begin{array}{c} \mr{vec}(\bar{\mb \Delta}_{\mb A}) \\ \bar{\mb \Delta}_{\mb b} \end{array} \right], \label{eqn:one-iter}
\end{eqnarray}
where in this context $\kron$ denotes the Kronecker product of matrices.
Since $\vI_6 = \vI_4 \kron \vI_2$, and because the eigenvalues of a Kronecker product of symmetric
matrices are the pairwise products of the eigenvalues of each factor,
we have 
\begin{align*}
  \| \mb M \|_{\ell^2 \to \ell^2} 
  &=
  \norm*{
    \vI - 
    \left[
      \begin{array}{cc} \tfrac{t_{\mb A}}{c} \mb U (\nu \dot{\mb \Psi})
        \mb U^* & \tfrac{t_{\mb A}^{1/2} t_{\mb b}^{1/2}}{c} (\mb U (\nu \dot{\mb \psi}) )  \\
        \tfrac{t_{\mb A}^{1/2} t_{\mb b}^{1/2}}{c} ((\nu \dot{\mb \psi})^* \mb U^* ) & \tfrac{t_{\mb
    b}}{c} \innerprod{ \One}{ \nu \dot{\mb \psi} } \end{array} \right]
  }_{\ell^2 \to \ell^2}.
\end{align*}
By our choice of $t_{\vA}$ and $t_{\vb}$, and the assumption $\vU \One = \Zero$, we can write
\begin{equation*}
  \left[
    \begin{array}{cc} \tfrac{t_{\mb A}}{c} \mb U (\nu \dot{\mb \Psi})
      \mb U^* & \tfrac{t_{\mb A}^{1/2} t_{\mb b}^{1/2}}{c} (\mb U (\nu \dot{\mb \psi}) )  \\
      \tfrac{t_{\mb A}^{1/2} t_{\mb b}^{1/2}}{c} ((\nu \dot{\mb \psi})^* \mb U^* ) & \tfrac{t_{\mb
  b}}{c} \innerprod{ \One}{ \nu \dot{\mb \psi} } \end{array} \right]
  =
  \left[
    \begin{array}{cc} \tfrac{t_{\mb A}}{c} \mb U (\nu \dot{\mb \Psi} - \vI)
      \mb U^* + \frac{\vU\vU\adj}{\norm{\vU}^2_{\ell^2\to\ell^2}} & \tfrac{t_{\mb A}^{1/2} t_{\mb
      b}^{1/2}}{c} (\mb U (\nu \dot{\mb \psi} - \One) )  \\
      \tfrac{t_{\mb A}^{1/2} t_{\mb b}^{1/2}}{c} ((\nu \dot{\mb \psi} - \One)^* \mb U^* ) & \tfrac{t_{\mb
  b}}{c} \innerprod{ \One}{ \nu \dot{\mb \psi} - \One } + 1 \end{array} \right]
\end{equation*}
and so by the triangle inequality for the operator norm
\begin{eqnarray}
\| \mb M \|_{\ell^2 \to \ell^2} &\le& \left\| \mb I - \left[ \begin{array}{cc} \frac{\mb U \mb
  U^*}{\| \mb U \|_{\ell^2\to\ell^2}^2} & \\ & 1  \end{array} \right] \right\|_{\ell^2 \to \ell^2} + \left\|\left[
  \begin{array}{cc} \frac{\mb U}{\norm{\vU}_{\ell^2\to\ell^2}} ( \nu \dot{\mb \Psi} - \mb I )
    \frac{\mb U\adj}{\norm{\vU}_{\ell^2\to\ell^2}} & \frac{1}{\sqrt{c}} \frac{\mb
      U}{\norm{\vU}_{\ell^2\to\ell^2}} (\nu \dot{\mb \psi}-\One) \\ \frac{1}{\sqrt{c}} (\nu
      \dot{\mb \psi}-\One)^* \frac{\mb U^*}{\norm{\vU}_{\ell^2\to\ell^2}} & 
      \innerprod{ \frac{\One}{c}}{ \nu
\dot{\mb \psi} - \One} \end{array} \right] \right\|_{\ell^2 \to \ell^2} \nonumber \\
&\le& 1 - \frac{1}{\kappa} + 2 \| \nu \dot{\mb \psi} - \One\|_{\ell^\infty}, \label{eqn:M-bound}
\end{eqnarray}
since
\begin{align*}
  \left\|\left[
    \begin{array}{cc} \frac{\mb U}{\norm{\vU}_{\ell^2\to\ell^2}} ( \nu \dot{\mb \Psi} - \mb I )
      \frac{\mb U\adj}{\norm{\vU}_{\ell^2\to\ell^2}} & \frac{1}{\sqrt{c}} \frac{\mb
      U}{\norm{\vU}_{\ell^2\to\ell^2}} (\nu \dot{\mb \psi}-\One) \\ \frac{1}{\sqrt{c}} (\nu
      \dot{\mb \psi}-\One)^* \frac{\mb U^*}{\norm{\vU}_{\ell^2\to\ell^2}} & 
      \innerprod{ \frac{\One}{c}}{ \nu
  \dot{\mb \psi} - \One} \end{array} \right] \right\|_{\ell^2 \to \ell^2}
  &\leq
  \left\|\left[
    \begin{array}{cc} \frac{\mb U \diag(\nu \dot{\vpsi} - \One)
      \vU\adj}{\norm{\vU}_{\ell^2\to\ell^2}^2} & \Zero \\ \Zero & 
      \innerprod{ \frac{\One}{c}}{ \nu
  \dot{\mb \psi} - \One} \end{array} \right] \right\|_{\ell^2 \to \ell^2} \\
  &\quad+
  \left\|\left[
    \begin{array}{cc} \Zero & \frac{1}{\sqrt{c}} \frac{\mb
      U (\nu \dot{\mb \psi}-\One)}{\norm{\vU}_{\ell^2\to\ell^2}} \\ 
\frac{1}{\sqrt{c}}  \frac{(\nu \dot{\mb \psi}-\One)^* \mb U^*}{\norm{\vU}_{\ell^2\to\ell^2}}
      & 
      \Zero \end{array} \right] \right\|_{\ell^2 \to \ell^2}
\end{align*}
and by H\"{o}lder's inequality
\begin{equation*}
  \abs{\ip{ \tfrac{\One}{c}}{ \nu \dot{\mb \psi} - \One} } \leq \norm{\nu \dot{\mb \psi} -
  \One}_{\ell^\infty},
  \quad
  \tfrac{1}{\sqrt{c}} \norm{\nu \dot{\mb \psi}-\One}_{\ell^2} \leq \norm{\nu \dot{\mb \psi} -
  \One}_{\ell^\infty}.
\end{equation*}

\paragraph{Inductive argument for \eqref{eqn:opt-main-bound}.} We begin by noting that since $\mb
A_0 = \mb I$, $\mb b_0 = \mb 0$, \eqref{eqn:opt-main-bound} holds for $k =0$. Now assume that it
is true for $0,1,\dots, k-1$. If we can verify that 
\begin{equation} \label{eqn:psi-cond}
    2 \| \nu \dot{\mb \psi} - \One\|_{\ell^\infty} \le \frac{1}{2\kappa},
\end{equation}
then by \eqref{eqn:one-iter} and \eqref{eqn:M-bound} together with $t_{\vb}=1$, we have 
\begin{eqnarray*}
    \left\| \left[ \begin{array}{c} t_{\mb A}^{-1/2} \mr{vec}( \mb A_k - \mb A_\star ) \\ \mb b_k -
      \mb b_\star \end{array} \right]\right\|_F^2 &\le& \left( 1 - \frac{1}{2 \kappa} \right)^2
      \left\| \left[ \begin{array}{c} t_{\mb A}^{-1/2} \mr{vec}( \mb A_{k-1} - \mb A_\star ) \\ \mb b_{k-1} - \mb b_\star \end{array} \right]\right\|_F^2.
\end{eqnarray*}
Applying the inductive hypothesis, we obtain \eqref{eqn:opt-main-bound} for iteration $k$. So,
once we can show that under the inductive hypothesis, \eqref{eqn:psi-cond} holds, the result will
be established. 

We begin by showing that under the inductive hypothesis, the errors $\mb \delta_i$ are all
bounded. Indeed, by the parallelogram law
\begin{eqnarray*}
 \| \mb \delta_i \|_2^2 &=& \| \mb A_{k-1} \mb u_i + \mb b_{k-1} - \mb v_i \|_2^2 \\
 &=& \left\| (\mb A_{k-1} - \mb A_\star ) \mb u_i + (\mb b_{k-1} - \mb b_\star) \right\|_2^2 \\
 &\le& 2 \frac{\| \mb A_{k-1} - \mb A_\star\|_{F}^2}{t_{\mb A}} t_{\mb A} \| \mb u_i \|_2^2 + 2 \| \mb b_{k-1} - \mb b_\star \|_2^2, 
\end{eqnarray*}
and so applying the inductive hypothesis to bound 
\begin{equation*}
    t_{\mb A}^{-1} \| \mb A_{k-1} - \mb A_\star \|_F^2 + \| \mb b_{k-1} - \mb b_\star \|_2^2 \le t_{\mb A}^{-1} \| \mb I - \mb A_\star \|_F^2 + \| \mb b_\star \|_2^2,
\end{equation*}
we obtain for all $i$
\begin{eqnarray}
    \| \mb \delta_i \|_2 &\le& \sqrt{2} \times \sqrt{ t_{\mb A}^{-1} \|\mb A_\star - \mb I \|_F^2
    + \| \mb b_\star \|_2^2} \times \max\left\{ t_{\mb A}^{1/2} \| \mb u_i \|_2, 1 \right\}
    \nonumber \\ 
     &\le& \sqrt{2} \times \sqrt{ t_{\mb A}^{-1} \|\mb A_\star - \mb I \|_F^2 + \| \mb b_\star
     \|_2^2} \times \frac{\sqrt{c} \| \mb U \|_{\ell^1\to \ell^2}}{ \| \mb U \|_{\ell^2 \to
   \ell^2} }. \label{eqn:delta-bound}
\end{eqnarray}
Since 
\begin{equation*}
  \psi(s) = \frac{1}{4\pi\sigma^2}\left(1 - \exp\left( - \frac{s}{2 \sigma^2} \right)\right),
\end{equation*}
we have 
\begin{equation*}
    \dot{\psi}(s) = \frac{1}{8 \pi \sigma^4} \exp\left( - \frac{s}{2 \sigma^2} \right),
\end{equation*}
and for all $s \geq 0$
\begin{equation*}
    \left| 1 - \nu \dot{\psi}(s) \right| = \left| 1 - 8 \pi \sigma^4 \dot{\psi}(s) \right| \le
    \frac{s}{2 \sigma^2}
\end{equation*}
by the standard exponential convexity estimate.
Plugging in our bound \eqref{eqn:delta-bound}, we obtain for all $i$
\begin{eqnarray*} 
\left| 1 - \nu \dot{\psi}( \tfrac{1}{2} \| \mb \delta_i \|_2^2 ) \right| &\le& \frac{ t_{\mb
A}^{-1} \| \mb A_\star - \mb I \|_F^2 + \| \mb b_\star \|_2^2 }{2\sigma^2} \times \frac{c \| \mb U
\|_{\ell^1\to \ell^2}^2 }{ \| \mb U \|_{\ell^2 \to \ell^2}^2 }. 
\end{eqnarray*}
Under our choice of $t_{\vA}$ and hypotheses on $\sigma$, this is bounded by $\tfrac{1}{4\kappa}$. 
\end{proof}

\end{document}